%% file: main.tex
\PassOptionsToPackage{dvipsnames,table}{xcolor}

\documentclass[pmlr]{jmlr}
\usepackage{amsmath,amssymb,graphicx,url, bbm}
\jmlrvolume{329}
\jmlryear{2026}
\jmlrworkshop{Conformal and Probabilistic Prediction with Applications}
\jmlrproceedings{PMLR}{Proceedings of Machine Learning Research}

\input{preamble}

\title[Particle-Based CP Uncertainty Calibration in Stratified Configuration Spaces]{Particle-Based Conformal Prediction for Contact-Aware Uncertainty Calibration in Stratified Configuration Spaces}

\author{
 \Name{Lu\'is Marques{$^\dagger$}} \Email{lmarques@umich.edu}\\
 \addr{Department of Robotics, University of Michigan, Ann Arbor, MI, 48109, USA}\\
 \Name{Kristian Popov{$^\dagger$}} \Email{kpopov@umich.edu}\\
 \addr{Department of Aerospace Engineering, University of Michigan, Ann Arbor, MI, 48109, USA}\\
 \Name{Dmitry Berenson} \Email{dmitryb@umich.edu}\\
 \addr{Department of Robotics, University of Michigan, Ann Arbor, MI, 48109, USA}
}

\editor{Ernst Ahlberg, Ulf Johansson, Henrik Boström, Alberto Carlevaro, Johan Hallberg Szabadváry and Lars Carlsson}

\begin{document}

\maketitle
\renewcommand{\thefootnote}{\fnsymbol{footnote}}
\footnotetext[2]{Denotes equal contribution.}
\renewcommand{\thefootnote}{\arabic{footnote}}
\begin{abstract}
\looseness-1 Reliable uncertainty representation is essential for deploying autonomous systems that interact with their environment, as robots must reason about how uncertainty arising from both stochasticity and model mismatch is impacted by contacts with obstacles (e.g., when navigating through a cluttered environment or inserting a part into an assembly).
We propose \methodNameLong (\textbf{\method}), a geometry-aware, conformal prediction-based algorithm that generates probabilistically valid prediction regions of the unknown future system configuration using particle-based models of arbitrary fidelity. 
While calibrated uncertainty predictions are essential for safe and efficient planning, analytical or learned motion models are often inaccurate---due to limited data, simplifying assumptions, unmodeled effects, etc.---which can lead to unsafe executions or task failure.
Additionally, when a robot contacts an obstacle, the distribution of its future configurations can become multimodal or disjoint, or lie along manifolds of lower intrinsic dimension than the space of possible robot configurations.
Our method uses a calibration dataset of system transitions to locally calibrate motion uncertainty estimates, constructing regions guaranteed to contain the future robot configuration at a user-set probability. Our calibration procedure captures how motion uncertainty varies between contact-rich and contactless motions, leading to sufficient coverage in both cases.
We evaluate our method on two simulated planning tasks: controlling a marble around a labyrinth and performing tight-tolerance peg-in-hole insertion with a manipulator. Compared to relevant baselines, \method achieves the user-specified coverage requirement both in and out of contact and achieves up to a $30\%$ absolute improvement in task success rate over the best baseline.
Project website: \projWebsite\par 
\end{abstract}
\begin{keywords}
Uncertainty quantification, conformal prediction, stratified configuration spaces, contact-aware planning, uncertainty-aware planning. 
\end{keywords}

\section{Introduction}
\vspace{-2mm}
\looseness-1 Deploying robots in real-world contact-rich environments---e.g., maneuvering in cluttered homes, low-clearance key/peg insertion, and manipulation in clutter---requires robots that can reason about how state and action uncertainty can significantly change across interaction regimes. 
For example, uncertainty about a manipulator's end-effector location can be drastically reduced by moving from a free-space configuration to sliding contact. Likewise, impacts, sticking contact, or transitions between contact modes can produce abrupt changes in the distribution of future robot configurations.
\textit{Aleatoric uncertainty}, which cannot be reduced, arises naturally in real systems due to measurement error, actuation delay, surface roughness, small interaction-caused material deformations, and so on. Additionally, \textit{epistemic uncertainty} can arise due to structural model mismatch, wear and tear, or limited task-relevant data.
Epistemic uncertainty is present in both classical approaches, where simplifying modeling assumptions often contradict physical reality, and in learned approaches, where models are naturally unreliable when out-of-distribution (OOD).
While prior work has demonstrated that intentional contact and compliance are important for uncertainty reduction in precise manipulation settings \citep{800,406944,rodriguez2021unstable}, it is not always trivial to induce contact safely when under uncertainty. 
In many contact-rich tasks, violating force or positional constraints can lead to robot, environmental, or human damage. 
Yet, contact interactions can induce multimodal, discontinuous, and \textit{\transdimensional}\footnote{\looseness=-1 A \emph{\transdimensional} distribution assigns positive probability to configuration space regions of different intrinsic dimensions, e.g., both full-dimensional free space and lower-dimensional contact manifolds \citep{calder2017planning}.} uncertainty distributions over future robot configurations, which are challenging to model.
There is thus a need for contact-aware provably calibrated uncertainty estimates. We seek to provide finite-sample guarantees that are valid across multiple interaction modes and demonstrate their utility for safe and efficient motion planning under stochasticity and significant model mismatch.
\newline\indent
Conformal prediction (CP) \citep{papadopoulos2002inductive,vovk2022algorithmic} is a distribution-free framework that, given access to a calibration dataset of input-output pairs and a predictive model, enables the construction of prediction regions with probabilistic coverage guarantees. 
CP's sets provably contain the true unknown output at a user-specified likelihood, without placing strong assumptions on the structure or accuracy of the predictive model. CP applied to robotics has often provided a single global uncertainty bound on next-configuration predictions. Yet, system uncertainty can vary drastically across the robot's action and configuration space and is also affected by the nearby environment's geometry. Contact-rich planning requires methods that can provide contact-aware guarantees beyond marginal validity, along with contact-aware uncertainty bounds.
\newline\indent
To tackle these challenges, we propose \methodNameLong (\textbf{\method}), a method for constructing next-configuration prediction regions with provable probabilistic guarantees in contact-rich settings.
Our method uses an approximate particle-based dynamics predictor of arbitrary fidelity and explicitly considers the geometric structure of the robot's configuration space (C-space) to generate valid prediction regions that can represent \transdimensional uncertainty without placing probability mass in infeasible C-space regions. Our main contributions are:
\begin{enumerate}[
    label=\roman*),
    leftmargin=*,
    itemsep=-1pt,
    topsep=1pt
]
    \item \looseness=-1 An algorithm for constructing contact-aware, one-step \transdimensional prediction regions guaranteed to contain the unknown future robot configuration with at least a user-specified probability, despite epistemic and aleatoric uncertainty.
    \item \looseness=-1 A proof and numerical validation of our stratum-aware group-conditional guarantees using an approximate dynamics model of arbitrary fidelity and a finite calibration set.
    \nopagebreak[4]
    \item \clubpenalty=0 \widowpenalty=0 \interlinepenalty=0 Simulation experiments of a marble labyrinth control task and a manipulator peg-insertion task, demonstrating the utility of our approach for contact-rich motion planning under significant uncertainty.\footnote{See \projWebsite for algorithm-comparison videos on both tasks.}
\end{enumerate}

\section{Related Work}

\looseness-1 \textbf{Mondrian conformal prediction methods} partition examples according to the input space, output space (label-conditional), or, more generally, both, and perform calibration independently in each partition. The resulting per-partition conformal thresholds can capture heterogeneity in prediction uncertainty and yield tighter prediction regions while retaining finite-sample coverage guarantees \citep{vovk2003mondrian,lofstrom2015bias,pmlr-v128-bostrom20a,cabezas2024regressiontreesfastadaptive}. In parallel, probabilistic conformal prediction (PCP) \citep{wang_probabilistic_2022} enables the construction of prediction regions using particle-based predictive models, thereby enabling calibration of nonparametric distributional representations. Our method builds on both Mondrian CP and PCP to generate dynamical particle-based prediction regions that vary with the robot's configuration, action, and nearby geometry.
\newline\indent
\textbf{Planning under uncertainty} is often tackled by representing predictive uncertainty with symmetric parametric distributions such as Gaussians \citep{pets2018,schoellig19ECC}. 
However, in environments and tasks where contactful interactions are helpful, Gaussian beliefs might not accurately represent the post-contact distribution of the robot's future configuration, which may be \transdimensional.
To resolve this, other approaches represent uncertainty using particles sampled from a dynamics model or simulator \citep{calder2017planning,wirnshofer2018robust}.
Particle-based representations naturally capture multi-modality and allow uncertainty to collapse onto manifolds of varying intrinsic dimension, making them well-suited to contact-rich dynamics.
\newline\indent
\looseness-1 \textbf{Conformal prediction in robotics} has enabled the creation of safety filters \citep{Strawn_2023}, warning systems \citep{luo_sample_2024}, calibrated state estimators \citep{yang2023safe}, Lie-algebraic uncertainty estimators \citep{claps}, and the construction of safe trajectories through dynamic environments \citep{lindemann2023safe,lindemann2023adaptive,huang2025inter}. 
More closely related to our work, \citet{marques_quantifying_2025,wafr26} proposed a local CP method to calibrate linear-Gaussian dynamics models, yielding state-action-dependent next-state prediction sets.
However, these works rely on parametric (Gaussian) uncertainty predictions, do not consider how uncertainty might vary across contact regimes, and construct prediction regions that can cover infeasible regions in C-space. Instead, we use particle-based predictions to build feasible, \transdimensional uncertainty sets.
\newline\indent
\textbf{Planning in stratified C-spaces} has been tackled by both explicitly representing robot C-space as a stratified space \citep{goodwine_motion_2002,wei_stratified_2004,harmati_fitted_2002} and implicitly representing it as multiple manifolds of differing intrinsic dimension \citep{berenson2009manipulation,englert2020sampling}.
These approaches use the geometric structure induced by contact to improve planning robustness in contact-rich tasks, but generally assume access to a sufficiently accurate dynamics model for planning.
Our work uses the stratified structure of the configuration space to construct probabilistically valid prediction regions capable of representing contact-dependent, \transdimensional uncertainty. These regions can then inform a model predictive control (MPC) planner that uses an approximate model subject to both aleatoric and epistemic uncertainty.

\vspace{-4mm}
\section{Problem Statement}\label{sec:prob-statement}
\vspace{-2mm}
\textbf{Configuration space (C-space).} We formulate our planning problem in C-space, the space of the robot's degrees of freedom (DoF), rather than directly in the physical 2D or 3D workspace \citep{cspace_lp}.
For example, the C-space of a 7-DoF manipulator with seven revolute joints is a 7-dimensional torus\footnote{This is the case when the manipulator's joint limits are not considered.} where each dimension represents the angle of each joint. 
Representing planning problems in C-space can be advantageous, as the robot's configuration becomes a point and continuous robot motion becomes a path.
Workspace obstacles induce infeasible regions in C-space, where robot geometry intersects the environment. Classically, sampling-based motion planners have searched for collision-free trajectories through the feasible C-space
\citep{rrt,prm}.
In contact-rich robotic tasks, however, motion along obstacles remains feasible but constrains the allowable motion of the system, causing the dynamics to evolve on lower-dimensional manifolds within the C-space.
For example, a robot manipulating an object may transition from moving the object in free space, to sliding it on a table, to jamming it in a corner.
These contact-induced constraints produce a stratified configuration space composed of manifolds with differing intrinsic dimension.
\newline\indent
\looseness-1 \textbf{Dynamics and objective.} Consider a discrete-time Markovian stochastic system with full configuration $\superConfig \in \superCspace$ evolving according to unknown dynamics $\trueDynamics$, such that $
    \nextState \sim \trueDynamics(\state, \action)$,
where $s_t =(\superConfig, \superVelocity) \in T\superCspace$ denotes the robot's state and $\action \in \actionSpace$ denotes the control input at time $t \in \mathbb N_0$.
We consider the problem of moving this system, safely and efficiently, from an initial state $\initState$ to a goal region $\goal$ in a known environment.
However, our ability to reliably estimate $\nextSuperConfig$ is practically hindered by uncertainty, because the true system dynamics $\trueDynamics$ are unknown and must be approximated from finite data. 
As a result, learned or analytical dynamics models $\approxDynamics$ are subject to aleatoric and epistemic uncertainty.
For example, $\approxDynamics$ might approximate a nonlinear $\trueDynamics$ as linear, provide low-fidelity estimates of frictional coefficients, or not model physical phenomena like stiction. 
Consequently, planning with an approximate model may lead to unsafe behavior or inefficient trajectories, especially in dynamic settings where uncertainties can compound over time.
Thus, in order to support reliable uncertainty-aware planning and control, we seek calibrated uncertainty estimates over the unknown future system configurations.
\newline\indent
Let $\Cspace$ denote the projection of the full configuration space $\superCspace$ over which we seek to construct calibrated prediction regions, and let $\config\in\Cspace$ denote the relevant components of the full configuration $\superConfig\in\superCspace$.
For example, in a marble labyrinth system $\superConfig$ might include both the marble position and the board's tilting angles, while $\config$ contains only the marble location in the plane.  We henceforth refer to $\Cspace$ as the C-space.
Let $\CPinput$ denote the information at prediction time, and $\CPoutput := \nextConfig$ the prediction target.
A transition resulting from commanding $\action$ from a known $\state$ on the unknown real system can be written as the input-output pair $(\CPinput, \CPoutput)\in \CPinputSpace \times \CPoutputSpace$.
To provide probabilistic coverage guarantees for $\trueDynamics$ without strong distributional or dynamics assumptions, we adopt the standard assumption in the CP literature: access to a finite \textit{calibration dataset} $(\calibset)$ of input-output pairs from the process, obtained under the same ``conditions'' that we will observe at test time.
Formally:

\begin{assumption}\label{ass:exchangeableDataset}
    We have a dataset of transitions that is exchangeable with test-time transitions; i.e., $\calibset :=\{(\CPinput_i, \CPoutput_i)\}_{i=1}^{\numcal}$ and $(\CPinputTest, \CPoutputTest)$ are exchangeable.\footnote{\looseness-1 A sequence of random variables is exchangeable if its joint probability distribution is invariant to permutations, implying no distributional shift between calibration and test time.}
\end{assumption}
\looseness=-1 We implicitly assume the current state to be known exactly.
Despite Assumption~\ref{ass:exchangeableDataset}, we allow $\pred$ to be pretrained/fit on an arbitrary dataset $\trainset$ that is not necessarily exchangeable with $\calibset$.
We aim to construct an input-output-dependent prediction region $\CPregion$ that contains the next unknown robot configuration $\CPoutputTest$ with at least probability $1-\failureRate$, i.e.,
\vspace{-2mm}
\begin{equation}\label{eq:margCoverage}
    \mathbb P \{ \CPoutputTest \in \CPregion(\CPinputTest) \} \ge (1-\failureRate),
    \vspace{-2mm}
\end{equation}
\looseness-1 where $\failureRate \in (0,1)$ is a user-defined acceptable failure rate.
Constructing a $\CPregion \subseteq \Cspace$ that is (marginally) \textit{guaranteed} to satisfy Equation~\eqref{eq:margCoverage} can enable uncertainty-aware, safety-critical tasks because the set $\CPregion$ can be used during planning to mitigate errors in the approximate dynamics model $\approxDynamics$. 
While this coverage requirement is necessary for safety, it is not sufficient for practical downstream use in planning. 
For example, predicting $\CPregion=\Cspace$ trivially satisfies Equation~\eqref{eq:margCoverage}, but is uninformative for planning since it does not vary with the robot's configuration, action, or local contact geometry and likely contains infeasible configurations.
Instead, we aim to make $\CPregion$ as \textit{tight} (volume-efficient) and \textit{adaptive}\footnote{Adaptive prediction sets are smaller in regions of low uncertainty and larger in regions of high uncertainty.} as possible while still satisfying the user-set $(1-\failureRate)$ coverage requirement.

 The true unknown distribution of future $\nextConfig$ is inherently constrained to feasible C-space regions $\feasibleCspace\subseteq\Cspace$, which in contact-rich settings include not only free-space configurations but also configurations along walls or at corners. 
That is, for a given $\CPinput$, the true unknown conditional distribution $p(\CPoutput \mid \CPinput)$ assigns probability mass only to configurations that are physically possible under the system's constraints. For example, in an environment with rigid, immovable obstacles, configurations in which the robot's workspace occupancy intersects an obstacle should have probability zero. 
Therefore, prediction regions that include infeasible configurations are inherently volume-inefficient.
Beyond being inefficient, prediction regions that include infeasible configurations can degrade the ability of uncertainty-aware planners to reason about safety.
Since planners often reason about the entire prediction region when evaluating plans, building prediction regions that contain physically impossible configurations may lead to overly conservative plans.
Consequently, we aim to construct probabilistically valid prediction regions while obeying the configuration constraints imposed by the environment. To achieve this,
we use the fact that in some robotic systems, $\feasibleCspace$ can be decomposed into multiple smooth \textit{stratified submanifolds} of differing intrinsic dimension.
By modeling $\feasibleCspace$ as a \textit{stratified space}, we can construct prediction regions across the strata and inherently respect the constraints of the true system.
We now formalize this structure and treat the robot configuration space as a stratified space in the remainder of this work.
\begin{definition}[\citealp{tran_sampling_2020}]\label{def:stratified_space}
Let $M$ be a finite index set over the strata. A finite stratified space $\feasibleCspace$ is a disjoint union $\feasibleCspace:=\bigsqcup_{m\in M}\strata$, where each stratum is denoted by $\strata$, and $\indexer:\feasibleCspace\to M$ maps each configuration to the index $m$ of the stratum containing it.
\end{definition} 
The conditional distribution of the true dynamics $p(\CPoutput \mid \CPinput)$ may concentrate probability mass
within a single stratum or across multiple strata for a given
state-action pair.
We assume full knowledge of the constraints of the system, meaning we are able to define an explicit stratum indexer $\indexer$ that maps configurations to their stratum indices. Formally:
\begin{assumption}
    \looseness=-1 The robot's feasible C-space $\feasibleCspace$ is a finite stratified space with a known stratum indexer $\indexer$. The support of $p(\CPoutput \mid \CPinput)$ may span one or more strata depending on $\CPinput$.
\end{assumption}
In our work, $\densityEstimate$ denotes a conditional predictive distribution of the future configuration.

\vspace{-4mm}
\section{Conformal Prediction Variants}
\vspace{-2mm}
We briefly review relevant CP concepts to situate our method in the literature; see \citet{vovk2022algorithmic,angelopoulos_theoretical_2024} for a more comprehensive overview.

\vspace{-3mm}
\subsection{Split Conformal Prediction (\splitcp)}\label{sec:splitCP}
\vspace{-2mm}

\textit{Split/inductive conformal prediction} (\splitcp) \citep{papadopoulos2002inductive,Lei2018distribution} uses a dataset of input-output pairs $(\calibset)$ that must be \textit{exchangeable} with a test case $(\CPinputTest, \CPoutputTest)$ to provide a provable probabilistic upper bound ($\CPthreshold$) on the prediction error of a fixed model $\pred$. 
Given a user-specified acceptable failure rate $\failureRate \in (0,1)$, \splitcp enables the construction of sets $(\CPregion)$ in the output space $(\CPoutputSpace)$ that contain the unknown test label $(\CPoutputTest)$ with probability at least $(1-\failureRate)$.
Note that the predictive model $\pred: \CPinputSpace \to \predspace$ produces an estimate in the prediction space $\predspace$, which is not necessarily the output space $\CPoutputSpace$.
Let $\scoreFunc : \predspace \times \CPoutputSpace \to \mathbb R$ be a user-designed \textit{symmetric} nonconformity score\footnote{
A nonconformity score is symmetric if the resulting $R_i$ are invariant to permutations of the calibration dataset. In this work, we consider elementwise scores of the form $\scoreFunc(\pred(\CPinput_i),\CPoutput_i)$, which are symmetric.} that assigns larger values to ``worse'' model predictions and lower values to more accurate predictions. For each calibration example $(\CPinput_i,\CPoutput_i)\in\calibset$, we can compute its score as $R_i = \scoreFunc(\pred(\CPinput_i),\CPoutput_i)$.
\splitcp then constructs the threshold $\CPthreshold$ as a finite-sample-corrected quantile of the calibration scores:
\vspace{-2mm}
\begin{equation}
    \CPthreshold := \text{Quantile}\left(\{R_i\}_{i=1}^{\numcal} \cup \{\infty \}; \frac{\lceil (\numcal + 1)(1-\failureRate)\rceil}{\numcal+1}\right).
    \vspace{-2mm}
\end{equation}
By exchangeability, the score of the test point $R_{n+1} :=
\scoreFunc(\pred(\CPinputTest),\CPoutputTest)$ satisfies
$\mathbb P\{R_{n+1}\le \CPthreshold \}\ge (1-\failureRate)$. We can then define the prediction region as
\vspace{-2mm}
\begin{equation}
    \CPregion(\CPinputTest):= \{y\in \CPoutputSpace : \scoreFunc(\pred(\CPinputTest), y) \le \CPthreshold \},
    \vspace{-2mm}
\end{equation}
which by construction satisfies the \textit{marginal coverage guarantee} of Equation~\eqref{eq:margCoverage}.
Despite the marginal guarantee holding for many possible $r$, the choice of $\scoreFunc$ greatly impacts $\CPregion$'s volume efficiency. 
Furthermore, \splitcp produces a single input-output-independent $\CPthreshold$ and thus does not, by itself, account for heterogeneity in prediction uncertainty across the input and output space, making $\CPregion$ not adaptive.
We now consider how to address these problems.

\vspace{-3mm}
\subsection{Probabilistic Conformal Prediction (PCP)}\label{sec:pcp}
\vspace{-2mm}

Probabilistic conformal prediction (PCP) \citep{wang_probabilistic_2022} considers settings where $\pred$ does not return a single point prediction, but rather particle samples from an approximate conditional distribution, $\densityEstimate$. 
Let $\predParticleCloud{i}=\{\predParticle{i}{1},\ldots,\predParticle{i}{L}\}$ be a set of $L$ independent samples drawn from our dynamics predictor, with $\predParticle{i}{l}\sim\hat p(Y\mid\CPinput_i)$.
We can define an augmented calibration dataset and an augmented test point as
\vspace{-2mm}
\begin{equation}
    \pcpAugset := \{(\CPinput_i, \CPoutput_i,\predParticleCloud{i}) \}^{\numcal}_{i=1},\quad \text{and} \quad (\CPinputTest, \CPoutputTest, \predParticleCloud{n+1}),
\end{equation}
\vspace{1mm}
\looseness-1 where $\CPoutputTest$ is unknown and the samples are generated from the inputs $\CPinput$. PCP then defines the score $\scoreFunc$ as the minimum norm between the true output $\CPoutput$ and the prediction particles $\predParticleCloud{i}$. \citet{wang_probabilistic_2022} show that this formulation still provides the desired marginal coverage:
\begin{restatable}[Theorem 1 of \citet{wang_probabilistic_2022}]{theorem}{pcpproof} \label{thm:pcp}
Given $\pcpAugset$ and $R_i = \min_{1\le l \le L} \lVert \CPoutput_i - \predParticle{i}{l}\rVert$, the set constructed from applying \splitcp satisfies marginal coverage, Equation~\eqref{eq:margCoverage}.
\end{restatable}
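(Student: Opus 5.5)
The plan is to reduce the statement to the standard split conformal argument of Section~\ref{sec:splitCP}, treating the particle cloud as part of the data. The only issue is that the score $R_i = \min_{1\le l\le L}\lVert \CPoutput_i - \predParticle{i}{l}\rVert$ depends on the random particles $\predParticleCloud{i}$ and not just on a deterministic function of $(\CPinput_i,\CPoutput_i)$. So the first step is to show that the augmented triples $(\CPinput_i,\CPoutput_i,\predParticleCloud{i})$ for $i=1,\dots,\numcal+1$ are exchangeable.

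For this I will use Assumption~\ref{ass:exchangeableDataset}, that the pairs $(\CPinput_i,\CPoutput_i)$ are exchangeable. I will also use that the particles are generated by a fixed predictor $\hat p$, trained on a separate dataset $\trainset$, with fresh randomness for each $i$. Concretely, I will write $\predParticleCloud{i} = G(\CPinput_i, U_i)$, where $G$ is a fixed measurable map and the $U_i$ are i.i.d.\ seeds independent of the data. Then the joint law of $\big((\CPinput_i,\CPoutput_i,U_i)\big)_{i}$ is the product of an exchangeable law and an i.i.d.\ law, and hence is permutation invariant. Applying the same fixed map to each coordinate preserves exchangeability, so the augmented triples are exchangeable.

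The second step is to observe that $R_i = \scoreFunc'(\CPinput_i,\CPoutput_i,\predParticleCloud{i})$ is the same deterministic function applied to each exchangeable triple. Therefore $R_1,\dots,R_{\numcal+1}$ are exchangeable, with $R_{\numcal+1}$ computed using the unknown $\CPoutputTest$.

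The third step is the usual quantile lemma. By exchangeability, the rank of $R_{\numcal+1}$ among $R_1,\dots,R_{\numcal+1}$ is stochastically dominated by a uniform on $\{1,\dots,\numcal+1\}$; ties only help. Hence $\mathbb P\{R_{\numcal+1}\le \CPthreshold\}\ge \lceil(\numcal+1)(1-\failureRate)\rceil/(\numcal+1)\ge 1-\failureRate$, where $\CPthreshold$ is the corrected quantile (with $\infty$ appended). Finally, $R_{\numcal+1}\le\CPthreshold$ holds if and only if $\CPoutputTest$ lies in $\CPregion(\CPinputTest)=\bigcup_{l}\{y:\lVert y-\predParticle{\numcal+1}{l}\rVert\le \CPthreshold\}$, a union of balls around the test particles. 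This gives Equation~\eqref{eq:margCoverage}.

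The main obstacle is the first step: making precise that the particle randomness is independent across samples and of the data, so that exchangeability is preserved. I will handle it with the seed representation above. Once it is in place, everything else is the standard split conformal argument.
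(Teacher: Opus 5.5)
Your proposal is correct and follows essentially the same argument the paper invokes by citing \citet{wang_probabilistic_2022}. That argument augments each pair with an independently sampled particle cloud, notes that the triples in $\pcpAugset$ remain exchangeable with the test triple, and applies the standard split conformal quantile argument to a score that is a fixed deterministic function of each triple; your seed representation $\predParticleCloud{i}=G(\CPinput_i,U_i)$ is a clean way to make the exchangeability step rigorous.
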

While PCP uses the minimum-norm distance, other nonconformity scores that are deterministic given $\pcpAugset$ and symmetric in the $L$ sampled particles would also maintain the marginal coverage guarantee \citep{kuchibhotla_exchangeability_2021}.
Particle-based dynamics predictors are widely used in robotics to represent contact-rich motion, which is often multimodal and discontinuous. This makes PCP particularly relevant to our task, as we look to construct probabilistically valid uncertainty sets without requiring an explicit parametric form for $\densityEstimate$.
However, PCP's conformal uncertainty upper bound $\CPthreshold$ is still global, despite our domain knowledge that prediction error should depend on a system's state, action, and configuration stratum.

\vspace{-3mm}
\subsection{Mondrian Conformal Prediction (MondrianCP)}\label{sec:mondrianCP}
\vspace{-2mm}

\looseness=-1 \textit{Mondrian conformal prediction} \citep{vovk2003mondrian} extends \splitcp by partitioning examples into $\numGroups$ disjoint groups and calibrating each group independently, i.e.,
instead of computing a single global threshold $\CPthreshold$ across all examples (\splitcp), MondrianCP computes a separate threshold per group $\CPthreshold_{\groupIndex}$.
This provides per-group coverage guarantees rather than marginal guarantees---where the partition may be defined over the feature space $\CPinputSpace$, the output space $\CPoutputSpace$, or the joint input-output space $\CPinputSpace \times \CPoutputSpace$.
Formally,
let $\CPinputSpace\times \CPoutputSpace$ be partitioned into $\numGroups$ disjoint sets, and $g: \CPinputSpace\times \CPoutputSpace \to \{1,\ldots, \numGroups \}$ be a map assigning each example to one of the $\numGroups$ groups.
Mondrian CP constructs prediction regions satisfying the user-specified \textit{group-conditional coverage}
\vspace{-1mm}
\begin{equation}\label{eq:group-coverage}
    \mathbb P\{\CPoutputTest \in \CPregion(\CPinputTest) \mid g(\CPinputTest, \CPoutputTest ) = \groupIndex \} \ge 1-\failureRate, \ \text{for all }\groupIndex \in \{ 1, \ldots, \numGroups\}.
    \vspace{-1mm}
\end{equation}
Let $\mathcal I_{\groupIndex}$ denote the indices of the calibration data points that are assigned to group $\groupIndex$, i.e., $\mathcal I_{\groupIndex} := \{ i \in [\numcal ]:g(X_i, Y_i)=\groupIndex\}$. 
$\lvert \mathcal I_{\groupIndex}\rvert$ denotes the number of $\calibset$ points in group $\groupIndex$. For each group, the conformal score quantile is
\vspace{-2mm}
\begin{equation}\label{eq:CPquantile-mondrian}
    \hat q_{\groupIndex} := \text{Quantile}\left(\{R_i\}_{i\in \mathcal I_{\groupIndex}} \cup \{\infty\}; \frac{\lceil (\lvert \mathcal I_{\groupIndex}\rvert+1)(1-\failureRate)\rceil}{\lvert \mathcal I_{\groupIndex}\rvert +1 } \right),
    \vspace{-2mm}
\end{equation}
yielding the MondrianCP prediction region
\vspace{-2mm}
\begin{equation}\label{eq:CPregion-mondrian}
    \CPregion(\CPinputTest) = \{y \in \CPoutputSpace: r(\pred(\CPinputTest),y) \le \hat q_{g(\CPinputTest, y)} \}.
    \vspace{-2mm}
\end{equation}
Under Assumption~\ref{ass:exchangeableDataset} and the dataset-independent group-assignment map $g$, MondrianCP preserves valid coverage within each group.
Intuitively, conditioning on a fixed group reduces to performing conformal calibration independently on the subset of $\calibset$ assigned to that group.
More formally, applying the Mondrian coverage guarantee \citep{angelopoulos_theoretical_2024}:
\begin{restatable}{theorem}{mondrianproof} \label{thm:mondrian}
Given Assumption~\ref{ass:exchangeableDataset} and a symmetric $\scoreFunc$, the Mondrian prediction region in Equation~\eqref{eq:CPregion-mondrian} satisfies the group-conditional coverage guarantee in Equation~\eqref{eq:group-coverage} for every $\groupIndex\in \{1,\ldots, \numGroups \}$ with $\mathbb P\{g(\CPinputTest, \CPoutputTest)=\groupIndex\} > 0$.
\end{restatable}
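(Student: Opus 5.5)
The plan is to condition on the group of the test point and reduce to an exchangeability argument among the calibration points that share that group. Fix $\groupIndex$ with $\mathbb P\{g(\CPinputTest,\CPoutputTest)=\groupIndex\}>0$. The membership event is handled first. By the definition in Equation~\eqref{eq:CPregion-mondrian}, evaluating at $y=\CPoutputTest$ shows that on the event $\{g(\CPinputTest,\CPoutputTest)=\groupIndex\}$ we have $\CPoutputTest\in\CPregion(\CPinputTest)$ if and only if $R_{n+1}\le \hat q_{\groupIndex}$, where $R_{n+1}:=\scoreFunc(\pred(\CPinputTest),\CPoutputTest)$. It therefore suffices to show $\mathbb P\{R_{n+1}\le \hat q_{\groupIndex}\mid g(\CPinputTest,\CPoutputTest)=\groupIndex\}\ge 1-\failureRate$.

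Next I condition further on which indices fall in group $\groupIndex$. Let $\mathcal J:=\mathcal I_{\groupIndex}\cup\{n+1\}$ be the random set of indices among all $n+1$ points whose group is $\groupIndex$. Condition on the event $\{\mathcal J=J\}$ for a fixed $J\ni n+1$ with $|J|=k+1$, together with the unordered multiset of data points $\{(X_i,Y_i)\}_{i\in J}$. Because $g$ is fixed independently of the data and the full sequence is exchangeable by Assumption~\ref{ass:exchangeableDataset}, the points indexed by $J$ are exchangeable conditional on this event. Concretely, any permutation of the $n+1$ points that maps $J$ to itself and fixes the complement leaves the joint law invariant and preserves the event. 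It follows that the test point is equally likely to be any element of the conditioned multiset. Since $\scoreFunc$ is symmetric and elementwise with $\pred$ fixed, the scores $\{R_i\}_{i\in J}$ are exchangeable given this conditioning.

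The standard quantile lemma then applies to these $k+1$ exchangeable scores. The test score is at most the $\lceil (k+1)(1-\failureRate)\rceil$-th smallest of them with probability at least $\lceil (k+1)(1-\failureRate)\rceil/(k+1)\ge 1-\failureRate$. Ties only help, and if the rank exceeds $k$ the quantile is $\infty$. This event is equivalent to $R_{n+1}\le\hat q_{\groupIndex}$ computed from the $k=|\mathcal I_{\groupIndex}|$ calibration scores augmented with $\infty$, as in Equation~\eqref{eq:CPquantile-mondrian}; this is the usual equivalence from \splitcp in Section~\ref{sec:splitCP}. Averaging over $J$ and over the multiset by the tower property gives the conditional bound for the event $\{g(\CPinputTest,\CPoutputTest)=\groupIndex\}$, which is the union of the events $\{\mathcal J=J\}$ with $n+1\in J$.

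The main obstacle is the conditional exchangeability step. The group assignment depends on the test label $\CPoutputTest$, so one must check carefully that conditioning on group membership, which is a symmetric function of the points within $J$, does not break exchangeability. The check I would carry out is that the conditioning event is invariant under permutations within $J$. The positivity assumption ensures that the conditioning is well defined.
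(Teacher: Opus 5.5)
Your proposal is correct and follows essentially the same route as the paper, which cites this result to \citet{angelopoulos_theoretical_2024} and, in its proof of the stratified analogue (Theorem~\ref{thm:stratcp}), conditions on $\mathcal I_{\groupIndex}$ and the test-group event, invokes conditional exchangeability (Lemma~\ref{lem:condex}), applies the standard SplitCP quantile argument, and averages over $\mathcal I_{\groupIndex}$. You take the same steps and additionally make explicit two points the paper leaves implicit: that the conditioning event is invariant under permutations within the group, and that the $\infty$-augmented calibration quantile agrees with the quantile of the $k+1$ pooled scores.
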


\looseness-1 In our setting, the group-assignment map $g$ will be informed by the robot's predicted future C-space strata, as well as its current state and applied action. In contact-rich motion planning, prediction uncertainty can vary significantly across different strata of the robot's C-space.
For example, sliding contact along a surface can display substantially different dynamics and uncertainty characteristics than free-space motion.
Hence, a single globally calibrated threshold can become overly conservative in low-uncertainty regions (or overly optimistic in high-uncertainty regions), making MondrianCP well-suited for our setting.

\vspace{-5mm}
\section{Method: \method}\label{sec:method}
\vspace{-1mm}
\looseness-1 We first show how to build probabilistically valid prediction regions ($\CPregion$) for a stratified configuration space using an approximate dynamics model and a finite dataset of true system transitions. These prediction regions provide finite-sample coverage guarantees with respect to the unknown future system configurations and, by construction, contain only feasible configurations in C-space. We then use $\CPregion$ to construct uncertainty-aware motion plans that account for both aleatoric and epistemic uncertainty. Figure~\ref{fig:calibration} illustrates the offline group partitioning and calibration procedure, while Figure~\ref{fig:inference} illustrates the online prediction-region construction process used during planning. Algorithms~\ref{alg:SParCP} and~\ref{alg:rollout_SPar-CP} describe our specific implementation (\method), from calibration data processing to downstream planning use.

\begin{figure}[b!]
\vspace{-5mm}
    \centering
\includegraphics[width=\linewidth]{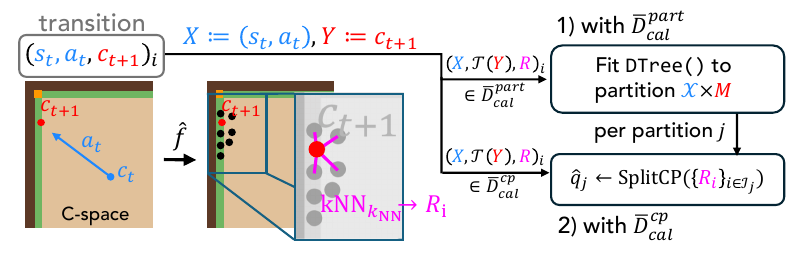}
    \vspace{-9mm}
    \caption{\looseness-1 \textbf{Offline calibration of \method.} For each transition in $\calibset$, $\approxDynamics$ receives the current state $\state:=(\superConfig,
    \superVelocity)$ and action $\action$ to sample $L$ predictive particles (black) of the future configuration $\nextConfig$, which we use to compute the nonconformity score $R_i$ (pink). The augmented subset $\augset^{part}$ is used to fit a regression decision tree that separates the input space---state, action, and future stratum index $\indexer(\nextConfig)$---into groups with approximately group-constant prediction scores. The holdout subset $\augset^{cp}$ is then passed through the \dtree, with each example landing in a leaf node and hence a corresponding group $\groupIndex$. SplitCP is performed independently for each group, resulting in a per-partition conformal threshold $\CPthreshold_{\groupIndex}$. 
    }
    \label{fig:calibration}
    \vspace{-4mm}
\end{figure}

\input{algo1}

\input{algo2.tex}

\subsection{Theoretical Analysis}\label{sec:theoretical-analysis}
We first show how to construct conformal prediction regions on a stratified C-space by defining Mondrian groups as a function of the current state-action pair and the future configuration stratum.
Consider a stratified feasible C-space $\feasibleCspace:=\bigsqcup_{m\in M}\strata,$ together with a known stratum indexer $\indexer:\feasibleCspace\to M$ mapping each configuration to the index of its corresponding stratum.
We introduce the grouping function
\vspace{-2mm}
\begin{equation}\label{eq:strat-grouping}
    g : \CPinputSpace \times M \to \{1, \dots, \numGroups\},
    \vspace{-2mm}
\end{equation}
which partitions examples according to the prediction-time information $\CPinput$ and the future C-space stratum $\indexer(\CPoutput)$.
This grouping allows the calibration to adapt across C-space strata with distinct dynamics and uncertainty structures.
To preserve conformal validity, the grouping function $g$ must be fixed independently of the calibration data used to compute the conformal thresholds.
For each induced group $\groupIndex \in \{1,\dots,\numGroups\}$, we follow the Mondrian procedure and define its calibration index set as
$
\mathcal I_{\groupIndex} := \{ i \in [\numcal] : g(\CPinput_i, \indexer (\CPoutput_i)) = \groupIndex \}
$.
\looseness-2 We then compute the conformal threshold within $\mathcal I_{\groupIndex}$ as in Equation~\eqref{eq:CPquantile-mondrian}.
To establish formal coverage guarantees, we use the following conditional exchangeability property \citep{angelopoulos_theoretical_2024}:
\begin{lemma}[Exchangeability property] \label{lem:condex}
\looseness=-1 Assume $\{(X_i,Y_i)\}_{i=1}^{n+1}$ are exchangeable. For any $\groupIndex$ with $\mathbb P\{g(X_{n+1},\indexer(Y_{n+1}))\allowbreak=\groupIndex\}>0$, conditional on $\mathcal I_{\groupIndex}$ and the event $g(X_{n+1},\indexer(Y_{n+1}))\allowbreak=\groupIndex$, the examples $\bigl((X_i,Y_i)\bigr)_{i\in\mathcal I_{\groupIndex}\cup\{n+1\}}$ are exchangeable.
\end{lemma}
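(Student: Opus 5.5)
The plan is to reduce the statement to a permutation-invariance argument on the joint law of the full sequence $Z_i := (X_i,Y_i)$, $i=1,\dots,n+1$. First define the group labels $G_i := g(X_i,\indexer(Y_i))$. Since $g$ and $\indexer$ are fixed measurable maps that do not depend on the data, each $G_i$ is a deterministic function of $Z_i$ alone. The conditioning event is then determined by the label vector $(G_1,\dots,G_{n+1})$: fixing a set $I\subseteq[\numcal]$, the event $E_I := \{\mathcal I_{\groupIndex}=I,\ G_{n+1}=\groupIndex\}$ equals $\{G_i=\groupIndex \ \forall i\in I\cup\{n+1\},\ G_j\neq\groupIndex\ \forall j\notin I\cup\{n+1\}\}$. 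We work with events $E_I$ of positive probability; this covers almost every realization, since $\mathbb P\{G_{n+1}=\groupIndex\}>0$ and there are finitely many $I$.

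The key step is as follows. Let $\sigma$ be any permutation of $\{1,\dots,n+1\}$ that maps $J:=I\cup\{n+1\}$ to itself and fixes every index outside $J$. We want to show that, for any measurable set $A$,
\begin{equation*}
\mathbb P\{(Z_{\sigma(i)})_{i\in J}\in A,\ E_I\} = \mathbb P\{(Z_i)_{i\in J}\in A,\ E_I\}.
\end{equation*}
By exchangeability, $(Z_{\sigma(1)},\dots,Z_{\sigma(n+1)})$ has the same joint law as $(Z_1,\dots,Z_{n+1})$. Write $E_I$ as $\{(Z_1,\dots,Z_{n+1})\in B_I\}$ for a fixed measurable set $B_I$, which we may do because each $G_i$ is a function of $Z_i$. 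The permuted sequence lies in $B_I$ exactly when the original does: $\sigma$ only shuffles indices within $J$, all of which carry label $\groupIndex$, and leaves the indices outside $J$ untouched. Hence $E_I$ is invariant under $\sigma$, and applying equality in distribution to the event $\{(\cdot)_J\in A\}\cap B_I$ gives the displayed identity. Dividing by $\mathbb P(E_I)>0$ shows that, conditional on $E_I$, the law of $(Z_i)_{i\in J}$ is invariant under every permutation of $J$. This is the claimed conditional exchangeability. Conditioning on $\mathcal I_{\groupIndex}$ as a random set then follows by taking the disjoint union over $I$.

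The main obstacle is to verify carefully that the conditioning event is genuinely permutation-invariant. This is exactly where the hypothesis that $g$ is fixed independently of the calibration data enters. If $g$ were fit on the same $\calibset$, as with a decision tree, then $G_i$ would depend on all the $Z_j$ and the invariance argument would fail. This is why the paper fits the tree on the separate subset $\augset^{part}$ and uses $\augset^{cp}$ for calibration. I would also note that the same argument extends verbatim to the augmented triples $(X_i,Y_i,\predParticleCloud{i})$ used by PCP, because particles generated i.i.d.\ given $X_i$ preserve exchangeability. This extension is what is needed to combine the lemma with Theorem~\ref{thm:pcp} and Theorem~\ref{thm:mondrian} downstream.
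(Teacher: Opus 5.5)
Your proof is correct. The paper does not prove this lemma; it imports it from \citet{angelopoulos_theoretical_2024}, and your argument is the standard proof of that cited fact: each label $g(X_i,\indexer(Y_i))$ depends on $(X_i,Y_i)$ alone, so the event $\{\mathcal I_{\groupIndex}=I,\ g(X_{n+1},\indexer(Y_{n+1}))=\groupIndex\}$ is unchanged by every permutation of $I\cup\{n+1\}$ that fixes the remaining indices, and exchangeability survives the conditioning.

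The only thing to tidy is notation. With the paper's macros your condition renders as $G_j\neq j$ for all $j\notin I\cup\{n+1\}$, and your $J$ collides with the number of groups $\numGroups$, so rename the running index and the set $I\cup\{n+1\}$.
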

We are now able to provide the following coverage guarantee.
\begin{theorem} \label{thm:stratcp}
    Given Assumption~\ref{ass:exchangeableDataset}, the grouping function $g$ defined in Equation~\eqref{eq:strat-grouping}, and a symmetric score function $\scoreFunc$, let the \method prediction region be
    \begin{equation*}
        \CPregion(\CPinputTest)
        := \{y\in\feasibleCspace:
        \scoreFunc(\pred(\CPinputTest),y)
        \le \hat q_{g(\CPinputTest,\indexer(y))}\}.
    \end{equation*}
    For $\failureRate \in (0,1)$ and every group $\groupIndex$ with $\mathbb P\{g(\CPinputTest,\indexer(\CPoutputTest))=\groupIndex\}>0$, the \method prediction region $\CPregion(\CPinputTest)$ satisfies the group-conditional coverage guarantee
    \begin{equation*}
        \mathbb P \{\CPoutputTest \in \CPregion (\CPinputTest) \mid g(\CPinputTest, \indexer (\CPoutputTest) ) = \groupIndex\} \ge 1-\failureRate.
    \end{equation*}
\end{theorem}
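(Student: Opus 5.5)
The argument reduces the theorem to a within-group split conformal argument, using Lemma~\ref{lem:condex} to supply the needed exchangeability, in the same way Theorem~\ref{thm:mondrian} follows from the Mondrian construction.

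\textbf{Step 1: rewrite the coverage event.} Fix a group $\groupIndex$ with positive probability and work on the event $\{g(\CPinputTest,\indexer(\CPoutputTest))=\groupIndex\}$. On this event, $\CPoutputTest\in\feasibleCspace$ by the stratified-space assumption, since the true conditional distribution is supported on feasible strata. The threshold used at $y=\CPoutputTest$ in the definition of $\CPregion$ is therefore exactly $\hat q_{\groupIndex}$. Hence
\[
\{\CPoutputTest\in\CPregion(\CPinputTest)\}\cap\{g(\CPinputTest,\indexer(\CPoutputTest))=\groupIndex\} = \{R_{n+1}\le \hat q_{\groupIndex}\}\cap\{g(\CPinputTest,\indexer(\CPoutputTest))=\groupIndex\},
\]
where $R_{n+1}=\scoreFunc(\pred(\CPinputTest),\CPoutputTest)$. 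It then suffices to show $\mathbb P\{R_{n+1}\le\hat q_{\groupIndex}\mid g(\CPinputTest,\indexer(\CPoutputTest))=\groupIndex\}\ge1-\failureRate$.

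\textbf{Step 2: reduce to a fixed group size.} Condition further on the index set $\mathcal I_{\groupIndex}$, so that its size $n_{\groupIndex}:=|\mathcal I_{\groupIndex}|$ is fixed. By Lemma~\ref{lem:condex}, the $n_{\groupIndex}+1$ examples indexed by $\mathcal I_{\groupIndex}\cup\{n+1\}$ are exchangeable under this conditioning. Because $\pred$ and $g$ are fixed independently of $\calibset$ and $\scoreFunc$ is elementwise, the scores $\{R_i\}_{i\in\mathcal I_{\groupIndex}\cup\{n+1\}}$ are also exchangeable. In the particle setting, the particle clouds are drawn independently given inputs, so the augmented triples remain exchangeable, as in Theorem~\ref{thm:pcp}.

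\textbf{Step 3: apply the quantile lemma.} Apply the standard split-CP rank argument within the group. By exchangeability, the rank of $R_{n+1}$ among the $n_{\groupIndex}+1$ scores is (super-)uniform, with ties broken conservatively. Since $\hat q_{\groupIndex}$ is the $\lceil (n_{\groupIndex}+1)(1-\failureRate)\rceil$-th smallest element of $\{R_i\}_{i\in\mathcal I_{\groupIndex}}\cup\{\infty\}$, this gives
\[
\mathbb P\{R_{n+1}\le\hat q_{\groupIndex}\mid \mathcal I_{\groupIndex},\ g(\CPinputTest,\indexer(\CPoutputTest))=\groupIndex\}\ge \frac{\lceil (n_{\groupIndex}+1)(1-\failureRate)\rceil}{n_{\groupIndex}+1}\ge 1-\failureRate.
\]
The case $n_{\groupIndex}=0$ is trivial because then $\hat q_{\groupIndex}=\infty$. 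Taking the expectation over $\mathcal I_{\groupIndex}$ via the tower property removes the extra conditioning and yields the claim.

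\textbf{Main obstacle.} The delicate point is Step~1. The test group depends on the unknown $\CPoutputTest$ through $\indexer(\CPoutputTest)$, and the region uses a $y$-dependent threshold, as in label-conditional Mondrian CP. I will verify carefully that evaluating the region's membership condition at $y=\CPoutputTest$ selects exactly the group being conditioned on, and that restricting the region to $\feasibleCspace$ loses no coverage because $\CPoutputTest\in\feasibleCspace$ almost surely. The remaining steps are routine once Lemma~\ref{lem:condex} is invoked.
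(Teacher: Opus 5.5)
Your proof is correct and takes the same route as the paper. Both invoke Lemma~\ref{lem:condex} to get exchangeability of the within-group scores, conditional on $\mathcal I_{\groupIndex}$ and the event $\{g(\CPinputTest,\indexer(\CPoutputTest))=\groupIndex\}$, then apply the standard SplitCP quantile argument and average over $\mathcal I_{\groupIndex}$; your Step~1 (membership of $\CPoutputTest$ in $\CPregion(\CPinputTest)$ reduces exactly to $R_{n+1}\le\hat q_{\groupIndex}$ on that event) and your remark on the particle-augmented triples spell out details the paper leaves implicit.
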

\begin{proof}
\looseness=-1
By Lemma~\ref{lem:condex}, conditional on $\mathcal I_{\groupIndex}$ and $g(X_{n+1},\indexer(Y_{n+1}))=\groupIndex$, exchangeability of the examples and symmetry of $\scoreFunc$ imply exchangeability of the scores within group $\groupIndex$.
The standard SplitCP argument therefore gives coverage under this conditioning.
Averaging over $\mathcal I_{\groupIndex}$ yields the stated guarantee conditional on $g(X_{n+1},\indexer(Y_{n+1}))=\groupIndex$.
\end{proof}

Theorem~\ref{thm:stratcp} establishes coverage conditional on the group $g(\CPinput, \indexer (\CPoutput)) = \groupIndex$. This group index is a function of the stratum of the unknown test output $\CPoutput$, which is not available at inference time.
Thus,
to obtain inference-time prediction regions, we evaluate the group assignment $g$ for all candidate output stratum indices $\indexer (\CPoutput)$ and construct the final prediction region $\CPregion$ as the union of the corresponding group-specific regions (see Figure~\ref{fig:inference}).
Because the strata are disjoint and the union of all strata completely covers the feasible output space, the group-conditional guarantees combine to yield marginal coverage guarantees by the law of total probability, i.e., the union-over-strata construction also provides marginal coverage.
\begin{restatable}[Marginal Coverage by Union of Output Strata]{restcorollary}{margcov}
\label{cor:margcov}
Let $\CPregion(\CPinputTest)$ be the union of the group-specific
prediction regions obtained by evaluating $g(\CPinputTest,m)$ for every
output stratum index $m\in M$.
Then $\CPregion$ satisfies the marginal coverage guarantee
\[
\mathbb P\{\CPoutputTest \in \CPregion(\CPinputTest)\} \ge 1-\failureRate.
\]
\end{restatable}
\vspace{-4mm}
\textbf{\proofname}\ See Appendix~\ref{app:proofs}.

Let $\obstacleCspace:=\Cspace\setminus\feasibleCspace$ denote the infeasible regions in C-space---those occupied by obstacles or beyond joint limits.
We define each stratum $\strata$ to lie entirely within the feasible configuration space, i.e., $\strata \cap \obstacleCspace = \emptyset$, $\forall m\in M$.
Therefore, by construction, our prediction regions satisfy
\vspace{-2mm}
\[
\CPregion(\CPinputTest) \cap \obstacleCspace = \emptyset.
\vspace{-2mm}
\] In contrast, methods that construct prediction regions using distances in the ambient C-space, without accounting for its stratified structure, may extend into infeasible regions, yielding regions that are physically invalid and overly conservative.
We have thus shown how stratified configuration 
spaces
can be incorporated into Mondrian conformal prediction to construct feasible uncertainty-aware prediction regions for contact-rich robotic tasks.

\subsection{Proposed Implementation}\label{sec:implementation}
We now show a possible implementation of the general approach described in Section~\ref{sec:theoretical-analysis} for building stratified prediction regions. 
In some robotic settings, the uncertainty of the true system can be disjoint and multimodal, e.g., in contact-rich settings where small changes in state or action can result in discrete switches of the active constraint.
In such systems, particle-based dynamics representations are often used for motion planning. 
Assume we have access to an approximate dynamics model $\pred$ that can generate samples from an approximate conditional distribution over the next configurations. 
To construct prediction regions, we adopt the PCP framework from \citet{wang_probabilistic_2022}, which enables us to construct a calibrated prediction set using samples drawn from $\approxDynamics$. 
We define the nonconformity score $\scoreFunc$ as the distance between the true next configuration and the $\nearestNeighborIndex$-th nearest neighbor among the $L$ particles sampled from $\approxDynamics(\CPinput _i)$.
We choose the $\nearestNeighborIndex$-th nearest-neighbor score because it provides a continuously valued proxy for the local particle density~\citep{knn}.
Formally, the score function is
\vspace{-2mm}
\begin{equation}\label{eq:knn}
    r(\pred(X_i), Y_i) := \mathrm{kNN}_{\nearestNeighborIndex} (Y_i, \predParticleCloud{i}),
    \vspace{-2mm}
\end{equation}
\looseness-1 where $Y_i$ is the true next configuration, $\predParticleCloud{i}$ is the set of $L$ propagated particles, and $\mathrm{kNN}_{\nearestNeighborIndex}$ denotes the distance to the $\nearestNeighborIndex$-th closest particle.
Because this score is symmetric, we can 
construct probabilistically valid prediction regions using this scheme (Theorem~\ref{thm:pcp}).
While coverage guarantees hold for arbitrary $\nearestNeighborIndex$ and, more generally, for other symmetric scores that depend on the predicted particles and the true next configuration, these design choices can significantly impact downstream task performance. 
With this, we can now define the augmented calibration dataset, similarly to PCP, as $\augset := \{(\CPinput_i, \indexer(\CPoutput_i),R_i): (\CPinput_i,\CPoutput_i)\in \calibset\}$.

\begin{figure}[t]
    \centering
    \includegraphics[width=0.99\linewidth]{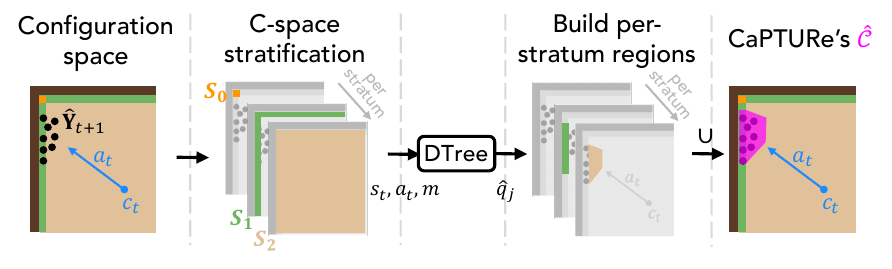}
    \vspace{-3mm}
    \caption{\textbf{Construction of stratified prediction region $\CPregion$}. Given an action $\action$ and current state $\state:=(\superConfig,\superVelocity)$, the predictive model $\approxDynamics$ returns predictive particles (black) representing possible future configurations. For each stratum index $m\in M$, we query the pre-fit \dtree with $(\state,\action,m)$ to get a contact-aware uncertainty threshold $\CPthreshold_{\groupIndex}$. We then evaluate which configurations in the corresponding stratum $S_m$ have score $\le \CPthreshold_{\groupIndex}$, where the score is the $\nearestNeighborIndex$-th distance between a candidate configuration in $S_m$ and the particles. Finally, we compose $\CPregion$ as the union of the per-stratum regions evaluated in the previous step.}
    \label{fig:inference}
\end{figure}

\looseness=-1 Similarly, although any fixed grouping function $g$ yields probabilistically valid prediction regions, the choice of $g$ can greatly affect the downstream efficiency and adaptivity of the conformal sets.
For example, choosing a grouping function that depends exclusively on the future configuration stratum creates one stratum-specific threshold $\hat q_m$, which could ignore how prediction uncertainty varies across different states and actions and possibly lead to undercoverage in some regimes. 
To capture local predictive uncertainty variations in a data-driven manner, we define our $g$ using the LOCART procedure, first presented by \citet{cabezas2024regressiontreesfastadaptive} and later adapted for dynamical systems by \citet{marques_quantifying_2025,wafr26}.
Following LOCART, we randomly split the augmented calibration dataset into two disjoint sets $\augset:= \augset^{part} \sqcup \augset^{cp}$. 
First, we fit a regression decision tree (\dtree) with the \textit{augmented inputs} $(\CPinput_i,\indexer(\CPoutput_i))$, where $\CPinput_i=(\state,\action)_i$ in our implementation, and set $R_i$ as the target variable.
The tree partitions the joint $\CPinputSpace \times M$ space to find regions of approximately constant model error.
The leaves of the fitted \dtree define the grouping function $g(X,\indexer(Y))$ used by our algorithm.
Within each leaf (group), we compute a separate conformal threshold $\hat q_{\groupIndex}$ using only calibration points of $\augset^{cp}$ that were assigned to that group.
This process preserves the group-conditional coverage guarantees of Theorem~\ref{thm:stratcp}, while producing prediction regions that adapt to the dynamics' local uncertainty structure. 
Importantly, $g$ is learned independently of the calibration set $\augset^{cp}$ and fixed before inference, preserving exchangeability.

We emphasize that the coverage guarantees from Theorem~\ref{thm:stratcp} apply only to single-step prediction.
Providing formal multistep coverage guarantees is not trivial because $\CPregion$ at later planning steps depends recursively on predicted states, particles, and $\CPregion$ earlier in the horizon. 
Consequently, the exchangeability assumption no longer holds across rollout steps.
While prior work has studied multistep CP under simpler non-input-dependent settings, extending coverage guarantees to our state-action- and stratum-dependent construction in a data-efficient manner remains an important direction for future work. 
Still, our method can be used for model predictive control (MPC) over a fixed horizon $H$, as demonstrated in our experiments. In Section~\ref{sec:exp}, we recursively apply the proposed single-step construction along the MPC rollout horizon.
Although this heuristic application does not provide formal multistep guarantees, we find it effective in practice.

\section{Experimental Results}\label{sec:exp}
\vspace{-2mm}
\looseness-1 To validate $\method$'s local and stratum-aware one-step coverage guarantees and demonstrate its usefulness for contact-rich probabilistic motion planning with an approximate dynamics model, we evaluate \method on two contact-rich tasks subject to aleatoric disturbances and significant model mismatch.
\looseness=-1 In Section~\ref{sec:expMarble}, we control a marble around a tight-clearance maze environment while avoiding known pit locations, showing our method's ability to perform highly dynamic tasks where small dynamics prediction inaccuracies can lead to failure. In Section~\ref{sec:expPeg}, we control a robot manipulator to insert a round peg into a low-tolerance hole fixture, a common assembly task in manufacturing that is sensitive to sensor and dynamics errors. \newline\indent
We compare our method with four relevant baselines: 1) an uncalibrated particle-based planner using $\approxDynamics$ directly (\emph{\particleNoCP}); 2) \emph{\lucca} \citep{marques_quantifying_2025}, a local CP method that calibrates a Gaussian $\approxDynamics$ differently based on the system's state and action, resulting in adaptive hyperellipsoidal prediction regions; 3) \emph{\pcp} (Section~\ref{sec:pcp}), a particle-based conformal prediction approach that builds input-output-independent balls around each predicted particle; and 4) an ablation of our method (Section~\ref{sec:implementation}) that does not consider the next-configuration stratum index as part of the partitioning procedure (\emph{\ablationNoStrata}), i.e., the inputs to the \dtree become $\CPinput=(\state,\action)\in \mathcal S \times \mathcal U = \CPinputSpace$. We evaluate \method with $\nearestNeighborIndex=L/2$\footnote{This gives $\nearestNeighborIndex=8$ for the marble task ($L=16$ particles) and $\nearestNeighborIndex=4$ for the peg-insertion task ($L=8$).} in our experiments and also report results for $\nearestNeighborIndex=1$ as a further hyperparameter ablation. The $\nearestNeighborIndex=1$ variant produces prediction regions that are more sensitive to individual predicted particles, whereas $\nearestNeighborIndex=L/2$ reduces this sensitivity and empirically produces smoother $\CPregion(\CPinputTest)$. All methods share the same approximate model $\approxDynamics$ and calibration dataset $\calibset$.
For methods that perform local calibration (\lucca, \ablationNoStrata, \method), we randomly select $70\%$ of $\calibset$ for fitting the \dtree and use the remainder to obtain the per-group conformal thresholds $\CPthreshold_{\groupIndex}$, as detailed in Section~\ref{sec:method}.
While the particle-based methods sample from a predictive Gaussian model, \lucca calibrates said $\approxDynamics$ directly. We use $\failureRate =0.1$ for all experiments.
See \projWebsite for videos comparing rollouts produced by \method and the baselines across both tasks.

\subsection{Marble Labyrinth Control}\label{sec:expMarble}
\vspace{-2mm}
We simulated a planar marble control environment (leftmost view in Figure~\ref{fig:marble_visualize}), inspired by the popular BRIO labyrinth toy. 
\begin{figure}[!b]
    \centering
    \vspace{-5mm}\includegraphics[width=\linewidth]{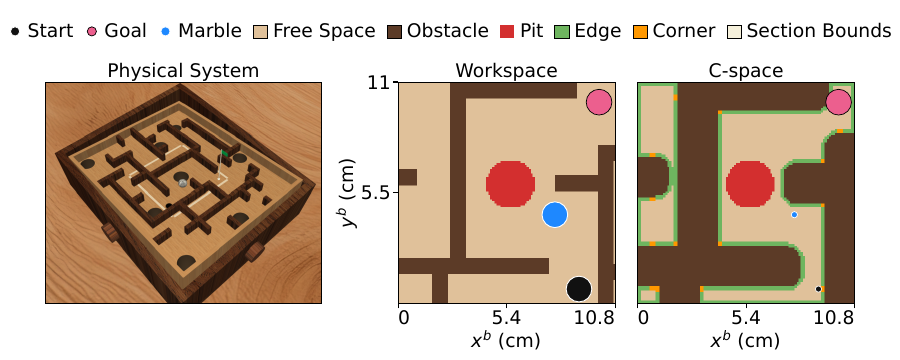}
    \vspace{-10mm}
    \caption{From left to right, in increasing levels of abstraction: 1) MuJoCo rendering of the full marble labyrinth with its tilting mechanism; the ``Center'' subsection is delineated by light borders. 2) Top-down view of the marble's workspace, which lies in the board plane. 3) Marble C-space used for motion planning. The walls are inflated by the marble radius, reducing the marble to a point robot. The Free Space (2D), Edge (1D), and Corner (0D) strata are colored beige, green, and orange, respectively.}
    \label{fig:marble_visualize}
\end{figure}
Following \citet{marbleDynamics}, the system's state is given by $\state=[x^b,\dot x^b, y^b, \dot y^b, \alpha, \beta]^\top$, where $(\cdot)^b$ indicates coordinates in the board-fixed frame and $(\alpha,\beta)$ are the plate inclination angles.
The control inputs are $\action =[\omega_1,\omega_2]\transpose$, motor rotational velocities that are related to the board's tilting rates $(\dot\alpha, \dot\beta)$ by a simplified linear relationship $\dot\alpha =\tiltGainAlpha \omega_1,\dot \beta =\tiltGainBeta\omega_2$.
The marble has a radius of 6.4 mm, and the pits have a radius of 12 mm. We assume that all walls and pits are known and define a fall as the marble's centroid entering a pit.
For both planning and physics stepping, we inflate the walls by the robot radius, moving from workspace (middle of Figure~\ref{fig:marble_visualize}) to C-space (right of Figure~\ref{fig:marble_visualize}). In C-space, we can treat the robot as a point robot.
The approximate dynamics model follows the discretized model of Equation~\eqref{eq:approximateMarbleStochastic}, from which particle-based methods sample $16$ particles at every planning step. However, the true marble moves according to Equation~\eqref{eq:realMarbleStochastic}, where a multiplicative dynamics term introduces considerable model mismatch.
For both the true and approximate systems, an additive disturbance is applied to the commanded control action, introducing aleatoric uncertainty. We used $\dt=0.1$ s for both $\approxDynamics$ and $\trueDynamics$. 
To enable safe maze navigation under dynamics uncertainty, we aim to construct $\CPregion$ providing coverage guarantees over the 2D marble position $\config = [x^b, y^b]\transpose$.\newline\indent
This is a challenging control problem due to the tight clearances between pits and walls and the system's limited control authority.
Since control inputs can only indirectly influence the marble location through the plate inclination, the system exhibits delayed responses and can accumulate significant momentum when inaccurately modeled. As a result, it can be challenging to decelerate or change directions quickly enough to avoid nearby pits.
Additionally, the marble's uncertainty evolution is strongly shaped by its interactions with the maze's walls. While in free space the marble motion is unconstrained, wall contacts reduce positional uncertainty along the wall-normal direction. Near edges and corners, the true uncertainty over future marble configurations can become disjoint and span both full-dimensional free space and lower-dimensional contact manifolds. These \transdimensional distributions might not be adequately captured by unimodal parametric forms such as Gaussians. This motivates our use of a particle-based implicit distribution representation capable of capturing contact-dependent uncertainty propagation.
Since we treat all walls and corners equally, the task induces the three strata visualized in the C-space panel of Figure~\ref{fig:marble_visualize}. 
To speed up computation, we precomputed a grid in $(x^b,y^b)$ offline, where each element has an associated stratum. Then, at test time, our stratum indexer $\indexer(\nextConfig)$ is a lookup on this position grid (see Appendix~\ref{app:marbledyn} for details). 
\looseness=-1 While relatively low-dimensional, this task serves as a representative testbed for broader planar dynamic manipulation problems. 
\newline\indent
\looseness-1 \textbf{Numerical coverage validation.} Let \textsc{lin}$(a,b,N)$ denote a linearly spaced sequence of $N$ real numbers between $a$ and $b$. A calibration dataset was collected for each maze section (see Figure~\ref{fig:marbleLabyrinth} for the location of each section within the full maze) by enumerating a Cartesian grid with states given by $(x^b,y^b)\in \textsc{lin}(\min_x,\max_x,8)\times \textsc{lin}(\min_y,\max_y,8)$, $(\dot x^b,\dot y^b)\in \textsc{lin}(-0.3,0.3,3)^2$, and $(\alpha,\beta)\in \textsc{lin}(-0.175,0.175,3)^2$, actions $(\omega_1,\omega_2)\in \textsc{lin}(-0.5,0.5,4)^2$, and performing one-step rollouts of $\trueDynamics$.
Positions initially in collision were projected outward, resulting in sufficiently many transitions in each of the three strata and $\lvert \calibset \rvert$ of approximately $147{,}000$ for Center, $127{,}000$ for Center Right, $137{,}000$ for Bottom Left, $165{,}000$ for Bottom Right, $124{,}000$ for Top Left, and $147{,}000$ for Top Center. To validate our per-group coverage guarantees empirically, we evaluate thousands of real-system transitions over each of the six maze subsections, again following a Cartesian grid as in the $\calibset$ collection (see Appendix~\ref{app:marbledyn} for details).
For each test transition, we propagate $10{,}000$ Monte Carlo (MC) samples using $\trueDynamics$ and build a prediction region $\CPregion$ for each method. 
For every MC particle, we determine the stratum of its resulting next configuration using the precomputed lookup indexer $\indexer(\nextConfig)$ and then evaluate whether the resulting configuration $\nextConfig$ is contained in $\CPregion$.
\looseness=-1 After evaluating all test transitions, we estimate the likelihood that particles landing in stratum $m$ are contained in $\CPregion$. This provides an empirical estimate of coverage conditional on the realized next-configuration stratum. To compare the sizes of prediction regions across methods, we voxelize the C-space and approximate the volume of each $\CPregion(\CPinputTest)$ as the total volume of the voxels in $\CPregion(\CPinputTest)$. We report the full results per map in Table~\ref{tab:marblecoveragebymap}, while in Table~\ref{tab:marblecoverage} we report averages across all $427,680$ test cases from the six maps. \newline\indent
\input{marble_testcases_table}
\looseness=-1 We do not analyze \particleNoCP because this baseline outputs a set of particles rather than a prediction region. \lucca, \pcp, and \ablationNoStrata achieve marginal coverage, as expected, but significantly undercover in free space while being overconservative in lower-dimensional strata.
This could be due to model mismatch leading to larger prediction uncertainty in unconstrained motion, as contacts can collapse the true next-configuration distribution along the contact normal.
Particle-based methods construct $\CPregion$ with significantly lower C-space volume than \lucca, which returns hyperellipsoidal prediction regions.
While PCP produces smaller prediction regions than our method, it does not satisfy the user-provided coverage requirement in all strata: it is significantly overoptimistic in Free Space and overconservative on Edges and Corners.
Both variants of our method achieve the $1-\failureRate=0.9$ coverage requirement per stratum, with the $\nearestNeighborIndex=8$ variant being slightly more volume-efficient. These results suggest that our stratum-aware approach can provide sufficient coverage across categorically different contact modes.
\newline\indent
\looseness-1 \textbf{Probabilistic motion planning.} We investigate whether the proposed $\CPregion$ improves downstream motion planning.
Assuming $\state$ to be known, we aim to safely navigate the marble around known pits toward a goal configuration. We use Model Predictive Path Integral Control (MPPI) \citep{williams2017information} as our MPC trajectory optimizer, due to its demonstrated ability to efficiently tackle problems with nontrivial dynamics and costs (see Appendix~\ref{app:mppiParams} for MPPI hyperparameters). 
After computing a plan, we execute its first action and replan recursively until either the task is complete or the marble has fallen into a pit.
The objective function, Equation~\eqref{eq:mppiCOstMarble}, balances task progress and safety by reducing the mean traversable distance between the propagated particles and \goal and penalizing the intersection area between the prediction region and the pits.
Figure~\ref{fig:marble_planning} displays the trajectories obtained in each of the six tested sections of the full maze, and Table~\ref{tab:marbleplanning} reports the per-section time to goal and success rate. Rollout videos are shown on the project website. \newline\indent
\input{marble_planning_table}
\looseness=-1 \particleNoCP can perform adequately on sections with greater clearance (e.g., Top Left). However, when safe maneuvering requires sharp calibrated uncertainty bounds, it can fall due to accumulating too much momentum. \pcp's performance is comparable. \lucca, due to building hyperellipsoids that do not respect local geometry, is generally overconservative. \ablationNoStrata provides mixed results, not being noticeably better than the uncalibrated baseline. Both this ablation and \lucca cannot distinguish between transitions landing in different configuration strata. Our method shows improved task success rate across the tested maps, with the $\nearestNeighborIndex=L/2=8$ variant performing best overall. This indicates that the higher sensitivity to individual predicted particles of the $\nearestNeighborIndex=1$ ablation can degrade planning performance despite still providing adequate coverage guarantees. Empirically, the smoother prediction regions produced by $\nearestNeighborIndex=8$ were easier to optimize over with the tested MPPI parameters.
These results suggest that \method can be helpful for complex contact-rich planar control tasks under significant uncertainty.
\vspace{-3mm}
\subsection{Tight-Tolerance Peg Insertion}\label{sec:expPeg}
\vspace{-2mm}
We further evaluate \method on a tight-tolerance peg insertion task adapted from the Factory simulation suite in Isaac Sim \citep{factory}. 
We control a 7-DoF manipulator (Franka Panda) to insert a cylindrical peg (diameter of $7.986 \text{ mm}$) into a $9.000 \text{ mm}$-diameter hole under both stochastic disturbances and significant model mismatch.
To facilitate contact-aware planning, we restrict the end-effector motion to the hole's plane, reducing its possible poses from the space of three-dimensional rigid-body poses, $SE(3)$, to the space of planar rigid-body poses, $SE(2)$. Hence, the peg configuration $\config\in SE(2)$ is parameterized by $[x,z,\theta]^\top$, where $(x,z)$ denotes the planar position of the peg's centroid and $\theta$ its planar orientation (see Figure~\ref{fig:cspaceviewPeg} for how the feasible C-space changes for different peg orientations).
The actions are desired $SE(2)$ displacements in the plane. Given the control input $\action$, the applied action $\boundedActionAt{t}$ is drawn from
\vspace{-2mm}
\begin{equation} \label{eq:true}
\boundedActionAt{t} = 1.5 \cdot \text{clip}(\action + \noise, \actionMin, \actionMax), \quad \text{with } \noise \sim \mathcal{N}(0, \text{diag}(1.5^2, 1.5^2, 1.5^2)),
\vspace{-2mm}
\end{equation}
where $\noise$ is a known additive white noise Gaussian disturbance introducing aleatoric uncertainty. The unknown multiplicative term $(1.5)$ introduces epistemic uncertainty.
Renderings of the simulation environment, the workspace and the C-space are shown in Figure~\ref{fig:factory_pih}.

Due to the small peg-hole clearance, successful insertion requires precise alignment.
However, near contact, even small configuration uncertainty can induce different physical interactions with the environment---the distribution over future peg configurations can become discontinuous and multimodal across contact regimes and can span both full-dimensional free space and lower-dimensional contact manifolds, producing \transdimensional uncertainty.
Additionally, overly optimistic future configuration estimation can lead to the peg getting stuck midway through insertion. 
This motivates the need for contact-aware and adaptive calibrated uncertainty estimates for planning.
\newline\indent
\begin{figure}[!t]
\centering\includegraphics[width=\linewidth]{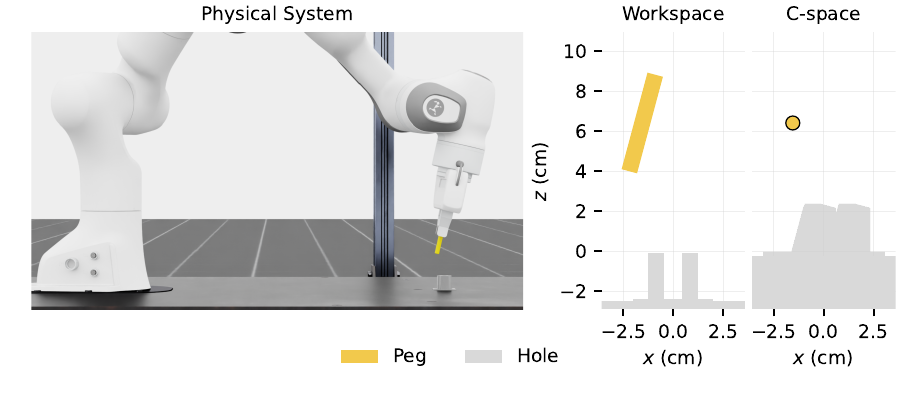}
    \vspace{-10mm}
    \caption{Tight-tolerance peg insertion in Isaac Sim. Left: Simulator view showing the 7-DoF manipulator, rounded peg (yellow), and hole fixture (gray). We restrict the end-effector motion to the hole's plane, reducing the space of possible peg poses from $SE(3)$ to $SE(2)$. Middle: Workspace view, where $(x,z)$ indicates the peg's centroid location relative to the hole, and $\theta$ indicates its in-plane orientation. Right: Cross-sectional view of the three-dimensional C-space used for motion planning, in which the peg is a point and obstacles are inflated according to the peg's shape and $\theta$ (not visualized). We show obstacle deformation for $\theta=-15^\circ$. Appendix~\ref{app:isaaclabImpDetail} shows obstacle deformations for other peg orientations. At this orientation, the peg cannot enter the hole.}
    \label{fig:factory_pih}
\end{figure}
To stratify the peg's $SE(2)$ C-space, we heuristically label each configuration according to a ``contact fingerprint.''
We first decompose the rectangular cross-section of the cylindrical peg into its four vertices and four edges.
To robustly detect contact between the peg and the environment, we place a disk of radius $0.75$ mm around each vertex and a tube of radius $0.75$ mm around each edge. Then, for each query configuration $c\in SE(2)$, we define the binary contact fingerprint $\phi(c)=(v_1,v_2,v_3,v_4,e_1,e_2,e_3,e_4)\in\{0,1\}^8$, where $v_i$ and $e_i$ denote the binary contact values of the $i$-th vertex and edge, respectively. Configurations with identical contact fingerprints are assigned to the same stratum.
As for the marble environment, we discretize the C-space offline and precompute the stratum indexer $\indexer$ on this grid using the binary contact fingerprints (see Appendix~\ref{app:isaaclabImpDetail} for details). At planning time, grid lookup returns $m=\indexer(\config)$, the index of the stratum $S_m$ containing $\config$. We construct $\calibset$ by again performing single-step rollouts with $\trueDynamics$ using sampled feasible configurations and actions.
We first sample initial configurations over the set of all strata in the $SE(2)$ C-space grid. For each $\config$, a control input $\action \in [\actionMin, \actionMax]$ is sampled uniformly. We repeat this process until we obtain $\lvert \calibset \rvert = 70{,}000$ transitions.
All particle-based methods sample $L=8$ particles.
Because the peg's configuration lies in $SE(2)$, we compute distances between particles and true configurations in the Lie algebra.
Given two poses $T_1,T_2 \in SE(2)$, the relative transform $T_\Delta = T_1^{-1}T_2$ has matrix logarithm $\log(T_\Delta)\in\mathfrak{se}(2)$, yielding the twist $(\rho_x, \rho_z, \omega)$.
We use the distance $\lVert(\rho_x, \rho_z, \ell\omega)\rVert_2$, where $\ell=0.0253$ m is the peg characteristic length; for small $\lvert \omega \rvert$, $\ell\lvert \omega \rvert$ approximates the maximum rotation-induced displacement of any point on the peg.

\looseness-1 \textbf{Numerical coverage validation.}
To verify the group-conditional coverage guarantee in Theorem~\ref{thm:stratcp} and the marginal coverage guarantee in Corollary~\ref{cor:margcov}, we collect a separate set of $90{,}000$ validation transitions, following a similar procedure to that used to construct $\calibset$. The coverage and volume are then estimated as in the previous section. Table~\ref{tab:pih_coverage} shows the results.
Because the scalar stratum index supplied to the \dtree encodes one of many binary contact fingerprints, we report in Table~\ref{tab:pih_coverage} a smaller, non-exhaustive set of interpretable categories: \emph{Point Contact} refers to configurations with zero-dimensional active contact support, whereas \emph{Edge Contact} refers to configurations with nonzero-length active contact support (cf. Figure~\ref{fig:contactmodesPeg} for examples and their corresponding reported contact labels). These categories do not include all strata; e.g., cases with multiple contact points appear only in the \emph{Aggregate} row.\newline\indent
\looseness-1 All methods approximately achieve marginal coverage, as expected. Although \lucca and \ablationNoStrata adapt to the current configuration and action, they and \pcp undercover in Free Space, likely because none of these baselines accounts for the next-configuration stratum. The two \method variants are the only methods that achieve both marginal and sufficient coverage within each reported contact category. This suggests that our approach can improve transition uncertainty quantification across complex configurations and contact modes.

\input{pih_coverage_table.tex}

\begin{figure}[!b]
    \centering
    \vspace{-5mm}
    \centering\includegraphics[width=\linewidth]{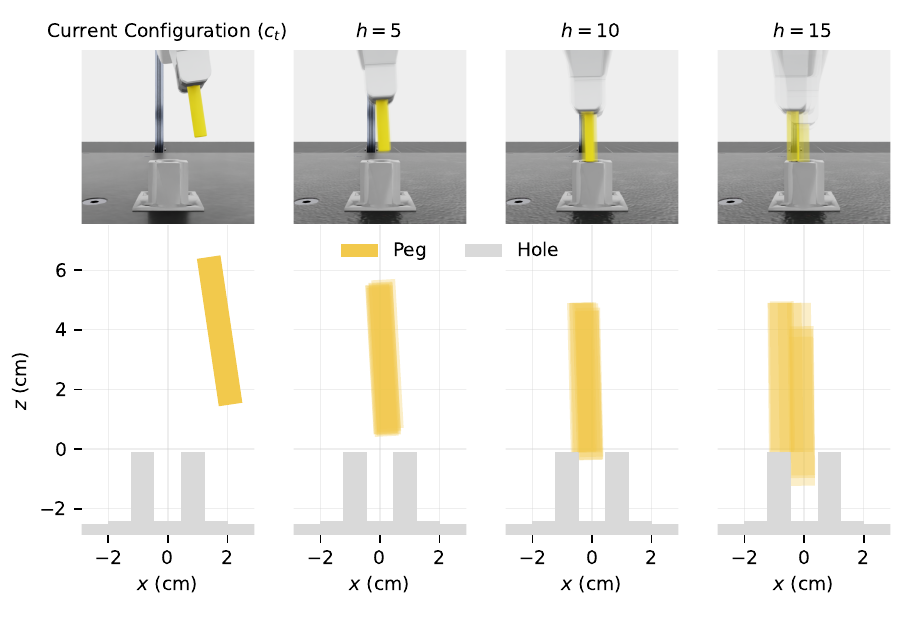}
    \vspace{-9mm}
    \caption{\looseness-1 Propagation of $\approxDynamics$'s particle predictions at relative planning-horizon steps $h=5,\, 10,\,15$. Top: Isaac Sim renderings of the current peg configuration and estimated future configurations. Bottom: Workspace view of the same particles.}
    \label{fig:plan_horizon}
\end{figure}

\textbf{Probabilistic motion planning.} We again use MPPI for trajectory optimization in the peg-insertion task. The objective combines the Factory peg-insertion cost, a reward for overlap with goal-compatible contact configurations, and a contact-aware control penalty. The complete objective is provided in Appendix~\ref{app:isaaclabImpDetail}.
Unlike in the marble example, for the peg-in-hole task we construct the calibrated prediction region only for the first step of each MPC plan. For later plan steps, we simply propagate the particles using $\approxDynamics$.
This is done partly for computational reasons and partly because the manipulator system is quasi-static and our \method coverage guarantees apply exclusively to single-step transitions.
For the more dynamic marble task, uncertainty can grow significantly along a planning horizon, requiring reliable multistep uncertainty estimates. In quasi-static settings, one-step errors are more forgiving and can be corrected more easily in the next MPC plan.
A visualization of the $\approxDynamics$ particles along the planning horizon is shown in Figure~\ref{fig:plan_horizon}.
Videos comparing the trajectories generated by \method and the baselines are provided on the project website.
We evaluate planning performance by starting from $50$ feasible initial configurations selected using a deterministic Halton sequence shown in Figure~\ref{fig:haltonPoses}. Table~\ref{tab:pih_inference_rand} reports the average number of steps taken to full insertion and success rates within a 75-step (5 s) limit.\newline\indent

\input{pih_inference_rand_table}

\looseness=-2 The results indicate that reliable uncertainty calibration can improve task success in tight-tolerance settings under significant model mismatch and aleatoric disturbances. The conformal prediction-based methods outperform the \particleNoCP baseline across the tested initial conditions. Furthermore, the improved performance of \method relative to the other CP methods suggests that our ability to represent uncertainty in stratified configuration spaces can improve downstream task success rate and efficiency/speed. These results also demonstrate that producing uncertainty estimates that are highly sensitive to individual predicted particles ($\nearestNeighborIndex=1$ ablation) can degrade planning performance.
Further, adequate coverage in each contact regime does not necessarily imply improved task performance. Empirically, the smoother and less spurious prediction regions produced by $\nearestNeighborIndex=4$ were easier to optimize over with the tested MPPI parameters. Our results suggest that treating robot motion uncertainty as action-, configuration-, and stratum-dependent can yield informative \transdimensional uncertainty representations for downstream probabilistic motion planning.

\vspace{-4mm}
\subsection{Limitations}
\vspace{-3mm}
\looseness=-3 Our guarantees are single-step and, although we observe improved closed-loop performance, we do not prove multistep closed-loop coverage.
\method assumes knowledge of the environmental constraints and access to a correct stratum indexer, since geometric uncertainty could produce prediction regions containing infeasible configurations.
Empirically, a sufficiently large $\lvert\calibset\rvert$ is needed to capture local uncertainty variations without sparsely populated Mondrian groups.
Task performance depends on hyperparameters: some \dtree parameters produce sparse partitions with volatile conformal thresholds, whereas broad partitions may adapt poorly. Similarly, different MPPI cost weights can lead to myopic or overconservative trajectories.
We leave a rigorous analysis of how performance varies with $\lvert \calibset \rvert$ for future work.
\vspace{-4mm}
\section{Conclusion}
\vspace{-3mm}
\looseness=-3 We proposed a state-action-stratum-aware conformal prediction algorithm to construct probabilistically valid next-configuration prediction regions in stratified configuration spaces.
Our approach combines particle-based dynamics predictors with a Mondrian-based adaptive calibration to produce uncertainty sets that can capture multimodal, discontinuous, and \transdimensional robotic motion uncertainty and do not include infeasible regions of the robot's configuration space. 
We proved and numerically validated \method's finite-sample coverage guarantees for single-step predictions. 
Practically, our adaptive uncertainty estimation procedure can improve task success, relative to relevant particle-based and adaptive baselines, in both a dynamic marble control task and a tight-tolerance peg-insertion task, even under significant model mismatch and aleatoric disturbances.

\vspace{-4mm}
\acks{\vspace{-2mm}This work was supported in part by the Office of Naval Research Grant N00014-24-1-2036 and NSF grants IIS-2113401 and IIS-2220876. The authors declare no competing interests.}

\bibliography{sparcp}

\appendix

\section{Additional Proofs}\label{app:proofs}

\margcov*
\begin{proof}
Let $G := g(\CPinputTest,\indexer(\CPoutputTest))$.
The mapping $y\mapsto g(\CPinputTest,\indexer(y))$ partitions the output space into disjoint groups indexed by $\groupIndex$. Hence the events $\{G=\groupIndex\}_{\groupIndex=1}^{\numGroups}$ are mutually exclusive and exhaustive.
By the law of total probability,
\[
\mathbb P\{\CPoutputTest \in \CPregion(\CPinputTest)\}
=
\sum_{\groupIndex=1}^{\numGroups}
\mathbb P\{\CPoutputTest \in \CPregion(\CPinputTest)\mid G=\groupIndex\}
\mathbb P\{G=\groupIndex\}.
\]
For every $\groupIndex$ with $\mathbb P\{G=\groupIndex\}>0$, Theorem~\ref{thm:stratcp} gives
\[
\mathbb P\{\CPoutputTest \in \CPregion(\CPinputTest)\mid G=\groupIndex\}
\ge 1-\failureRate.
\]
Terms with $\mathbb P\{G=\groupIndex\}=0$ vanish from the sum. 
Therefore,
\[
\mathbb P\{\CPoutputTest \in \CPregion(\CPinputTest)\}
\ge
\sum_{\groupIndex=1}^{\numGroups} (1-\failureRate)\mathbb P\{G=\groupIndex\}
=
1-\failureRate.
\]
\end{proof}

\vspace{-10mm}
\section{Additional Experimental Details}\label{apd:extraexp}

\subsection{Marble Labyrinth Dynamics}\label{app:marbledyn}

For both approximate particle propagation and true dynamics, the commanded action $\actionAt{t}$ is perturbed and clipped before execution,
yielding the bounded stochastic action
\begin{equation}
\boundedActionAt{t}
=
\mathrm{clip}(\actionAt{t}+\noise,\;\actionMin,\actionMax),
\qquad \text{where } 
\noise\sim\mathcal{N}(0, 0.0025 I_{2}).
\end{equation}
All methods have access to the approximate dynamics model $\approxDynamics$ below, from which they can sample particles; $\gravity$ denotes gravitational acceleration:
\begin{equation}\label{eq:approximateMarbleStochastic}
\underbrace{
\begin{bmatrix}
x^b_{t+1} \\
\dot{x}^b_{t+1} \\
y^b_{t+1} \\
\dot{y}^b_{t+1} \\
\alpha_{t+1} \\
\beta_{t+1}
\end{bmatrix}
}_{\nextState}
=
\underbrace{
\begin{bmatrix}
1 & \dt & 0 & 0 & 0 & 0 \\
0 & 1 & 0 & 0 & -\frac{5}{7}\gravity\dt & 0 \\
0 & 0 & 1 & \dt & 0 & 0 \\
0 & 0 & 0 & 1 & 0 & -\frac{5}{7}\gravity\dt \\
0 & 0 & 0 & 0 & 1 & 0 \\
0 & 0 & 0 & 0 & 0 & 1
\end{bmatrix}
}_{A_{\mathrm{approx}}}
\underbrace{
\begin{bmatrix}
x^b_{t} \\
\dot{x}^b_{t} \\
y^b_{t} \\
\dot{y}^b_{t} \\
\alpha_t \\
\beta_t
\end{bmatrix}
}_{\state}
+
\underbrace{
\begin{bmatrix}
0 & 0 \\
0 & 0 \\
0 & 0 \\
0 & 0 \\
\tiltGainAlpha\dt & 0 \\
0 & \tiltGainBeta\dt
\end{bmatrix}
}_{B}
\boundedActionAt{t}
\end{equation}
Yet, the true system evolves according to the mismatched dynamics
\begin{equation}\label{eq:realMarbleStochastic}
\nextState
=
\underbrace{
\begin{bmatrix}
1 & \dt & 0 & 0 & 0 & 0 \\
0 & 1 & 0 & 0 & -\frac{5}{7}\gravity\dt \cdot 2& 0 \\
0 & 0 & 1 & \dt & 0 & 0 \\
0 & 0 & 0 & 1 & 0 & -\frac{5}{7}\gravity\dt \cdot 2 \\
0 & 0 & 0 & 0 & 1 & 0 \\
0 & 0 & 0 & 0 & 0 & 1
\end{bmatrix}
}_{A_{\mathrm{real}}}
\state
+
B \boundedActionAt{t}.
\end{equation}

The grid used to look up strata is precomputed with a resolution of $1$ mm. Edge strata are defined as cells within $2$ mm of an inflated C-space obstacle. Corner strata are also within $2$ mm of a C-space obstacle, but while edges are in locally smooth wall surfaces, corners are at non-smooth obstacle features such as concave vertices---there are multiple obstacle constraints acting on a robot at a corner. The full maze environment, along with the start and goal locations for each subsection, is shown in Figure~\ref{fig:marbleLabyrinth}.

\begin{figure}[!b]
    \centering
    \vspace{-10mm}
\includegraphics[width=.72\textwidth]{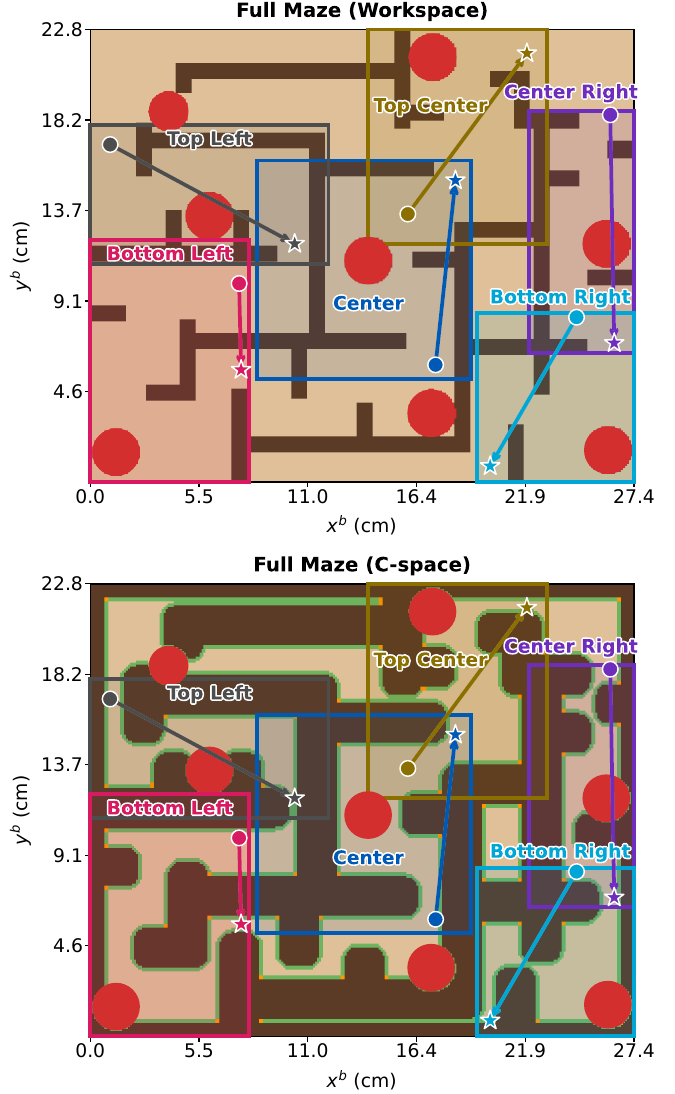}
\vspace{-4mm}
    \caption{Maze inspired by the BRIO 34030 marble labyrinth, with pits (red) enlarged and moved toward more central locations. The subsections used for motion planning evaluation are shown, along with their start positions (circles) and goals (stars).}
    \label{fig:marbleLabyrinth}
\end{figure}

The test case transitions collected for numerically validating coverage were obtained by enumerating the same position, velocity, board angle, and action Cartesian grid as for $\calibset$ collection.

\subsection{Marble Planning Implementation}\label{app:mppiParams}
For both tasks, we used Model Predictive Path Integral Control (MPPI) as the trajectory optimizer. For the marble control task, at each step MPPI sampled $4096$ control sequences over a horizon of $H=6$ steps, corresponding to $0.6$ s of future motion.
We used temperature $\lambda=0.2$ and an action perturbation covariance of $\Sigma_{\textrm{MPPI}}=\begin{bmatrix}1.0&0.1\\0.1&1.0\end{bmatrix}$; perturbed controls were clipped to remain within the actuation bounds.
Each sampled rollout was evaluated using the objective function
\vspace{-2mm}
\begin{equation}
  \costObjective =
  50\, d_{\mathrm{trav}}(\bar p_H, p_{\goal})+
  \sum_{h=1}^{H}
  \left[
  30\, d_{\mathrm{trav}}(\bar p_h, p_{\goal})
  +
  250\, n_{\mathrm{pit}}(\CPregion_h)
  \right],
  \vspace{-2mm}\label{eq:mppiCOstMarble}
\end{equation}
where $d_{\mathrm{trav}}$ is the traversable-path distance to the goal and $\bar p_h$ is the mean particle configuration.
While Euclidean distance does not distinguish between collision-free paths and paths through infeasible configurations, traversable-path distance considers only collision-free paths in C-space. We precomputed $d_{\mathrm{trav}}$ for each map--$\goal$ combination, enabling efficient lookup during planning.
To penalize unsafe motions, we included $n_{\mathrm{pit}}(\CPregion_h)$ --- the number of discretized C-space cells that are both in the prediction region $\CPregion_h$ and the known pit region --- which provides a larger penalty when a larger portion of $\CPregion$ is unsafe.

\subsection{Extra Marble Control Results}\label{app:marbleMoreResults}
In Table~\ref{tab:marblecoveragebymap}, we report empirical coverage and estimated volumes per map in the maze environment. In Figure~\ref{fig:marble_planning}, we visualize the trajectories obtained by each method in the six tested maze subsections.

\input{marble_testcasesPerMap_table}

\begin{figure}[!t]
    \centering
    \includegraphics[width=\linewidth]{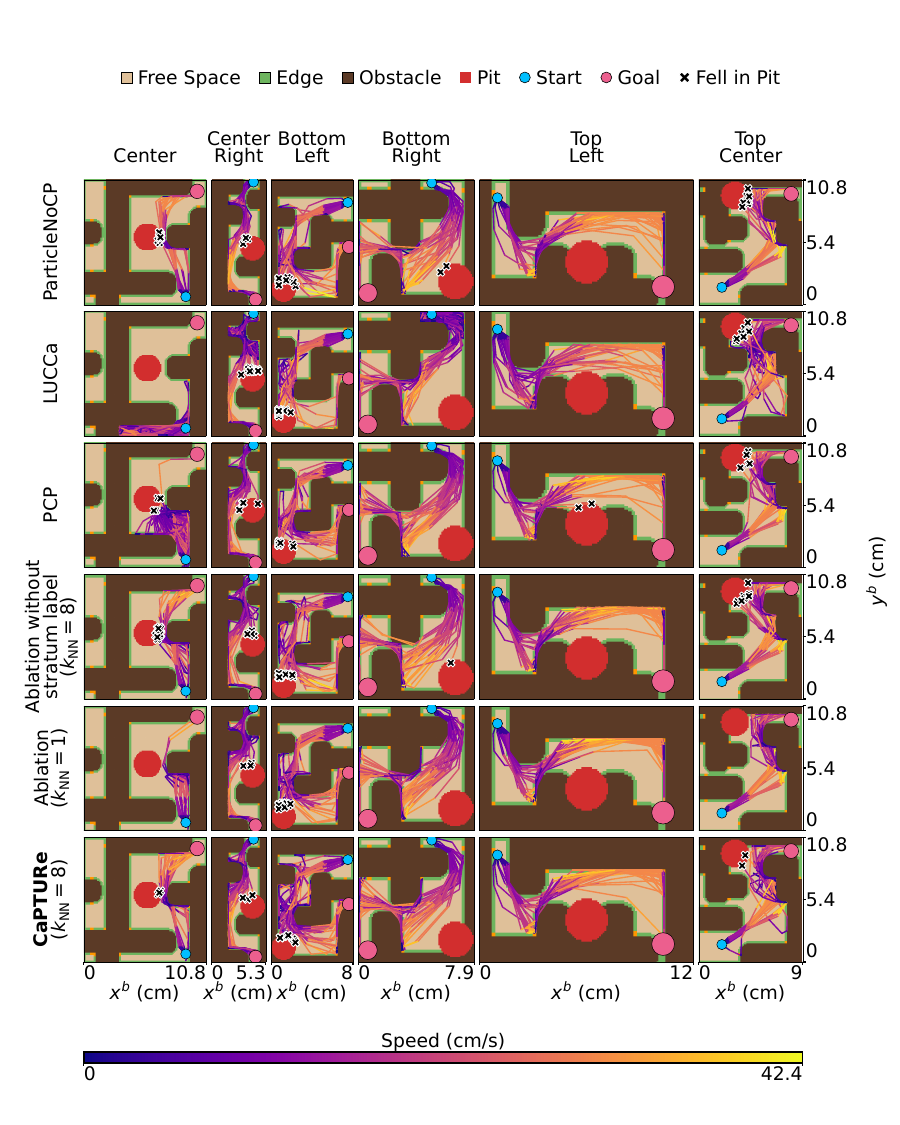}
    \caption{Rollouts of all methods across the six evaluated marble maze sections. }
    \label{fig:marble_planning}
\end{figure}

\FloatBarrier
\subsection{Isaac Sim Peg Insertion Implementation Details}\label{app:isaaclabImpDetail}

Isaac Lab Factory's default ``Peg Insertion'' task uses a 25-mm-tall receptacle with a 1-mm bevel around the circumference of the hole.
The beveled lip makes insertion substantially easier. We used a 24-mm-tall receptacle without a beveled hole.
In our experiments, we use a grid over the $SE(2)$ C-space of the peg to streamline dataset collection and define the set of poses over which we evaluate the prediction region during planning.
We construct the grid by discretizing the configuration space using: $x \in [-3, 3]$ cm and $z\in[-1, 5]$ cm, with $d_x=d_z=0.25$ mm, and $\theta \in [-40^\circ, 40^\circ]$, with $d_\theta=0.5^\circ$.
The grid cell $(x_u,z_v,\theta_w)$ corresponds to a peg pose with its centroid at $(x_u,z_v)$ and orientation $\theta_w$.
Each grid cell is analytically checked for feasibility. To account for small interpenetrations allowed by PhysX, we treat poses with a peg--environment overlap area of at most $0.01\,\mathrm{mm}^2$ (in the $xz$ plane) as feasible. Each feasible cell is assigned a stratum index using its binary contact fingerprint.

  Let $\widetilde\Psi_t$ denote the Factory insertion cost~\citep{factory},
  averaged over the first-step prediction region at $t=0$ and over the
  propagated particle cloud thereafter. Let $\rho_0$ denote the fraction of the
  first-step prediction region outside the goal stratum, and let $h_0$ denote
  its average normalized Hamming distance to the goal-contact fingerprint. With
  $\gamma=0.95$, $H=15$, and $[x]_+=\max\{x,0\}$, we use the following MPPI
  objective:
  \vspace{-3mm}
  \begin{equation*}
  \costObjective
  =
  \left(
  \sum_{t=0}^{H}
  \gamma^t
  \left[
  \widetilde\Psi_t
  +0.005\lVert \actionAt{t}\rVert_2^2
  \right]
  \right)
  +20\rho_0
  +0.6[h_0-0.5]_+\lVert \actionAt{0}\rVert_2^2 ,\vspace{-3mm}
  \end{equation*}
  with 128 samples and temperature $\lambda=0.1$.
  \looseness-1 Since the terminal state has no associated action,
  $\actionAt{H}=\mathbf{0}$. 
  As
  \particleNoCP does not produce a prediction region, we compute the first-step region statistics over an axis-aligned 
 C-space box fitted to its particles.

\begin{figure}[!hb]
    \centering
\centering\includegraphics[width=\textwidth]{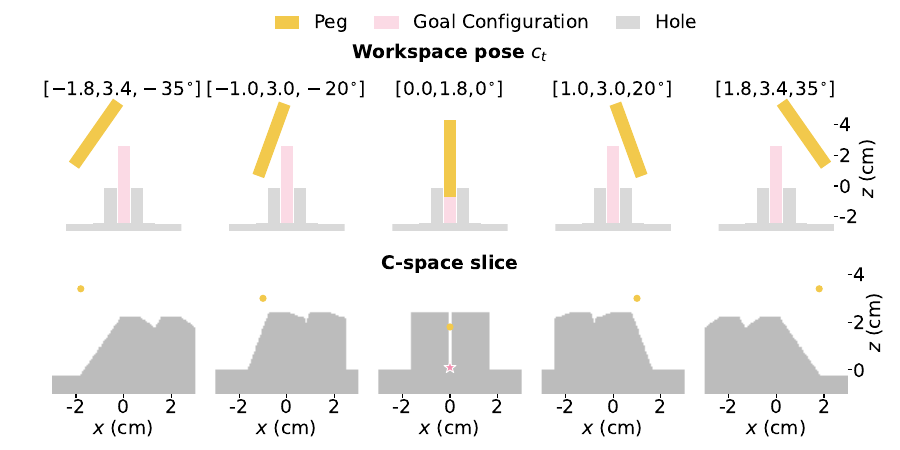}
\vspace{-8mm}
    \caption{\looseness-1 Workspace views and C-space slices for different peg orientations. For orientations in which the peg cannot be inserted, the hole appears blocked in the corresponding C-space slice. We plan in C-space, treating the peg as a point, and build a 3D C-space discretization offline. The third dimension, which is not visible in each slice, is the peg orientation $\theta$. The orientation increases from left to right, and the goal lies in the C-space slice with $\theta=0^\circ$.}
    \label{fig:cspaceviewPeg}
\end{figure}

\noindent\begin{minipage}{\textwidth}

    \centering
\centering\includegraphics[width=.9\textwidth]{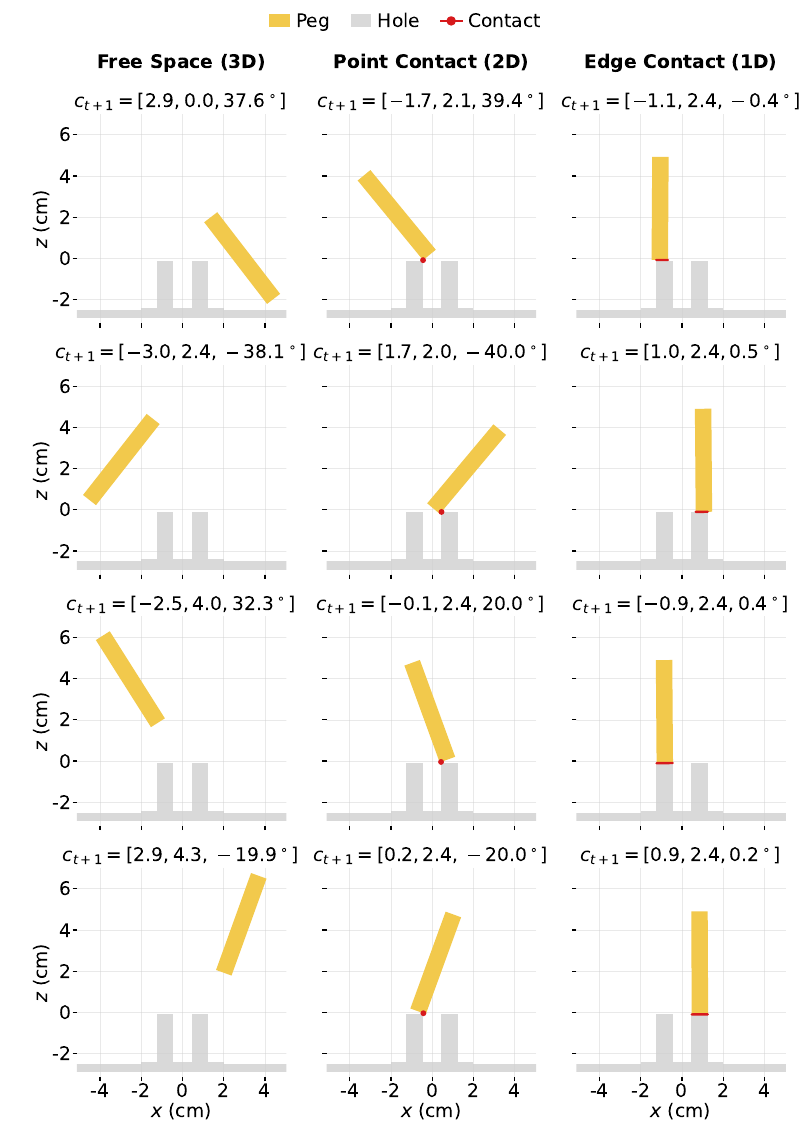}
\vspace{-4mm}
    \captionof{figure}{\looseness-1 Workspace views of some next configurations $\nextConfig$ from the $90{,}000$ validation transitions used to estimate coverage numerically for the peg-in-hole task. The reported ``interaction regimes''---Free Space, Point Contact, and Edge Contact---are used for interpretability. The \dtree receives the scalar stratum index $m=\indexer(\nextConfig)$.}
    \label{fig:contactmodesPeg}
\end{minipage}

\noindent\begin{minipage}{\textwidth}
    \centering
\centering\includegraphics[width=\textwidth]{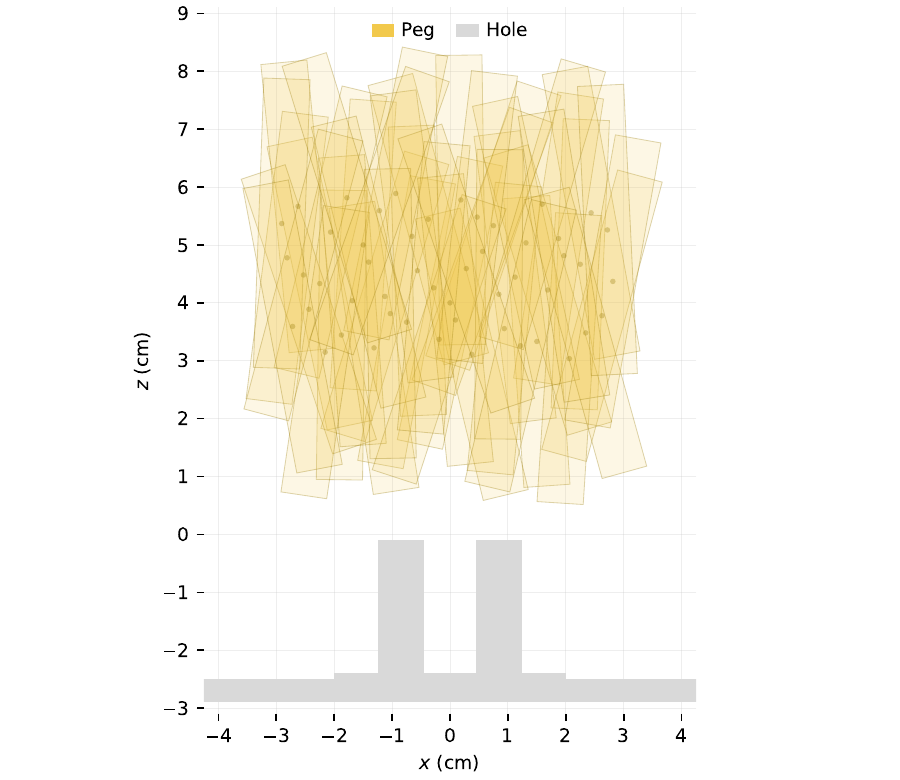}
\vspace{-4mm}
    \captionof{figure}{\looseness-1 Initial peg configurations (one per rectangle) used for the planning evaluation in Table~\ref{tab:pih_inference_rand}, obtained from a deterministic Halton sequence.}
    \label{fig:haltonPoses}
\end{minipage}

\end{document}

%% file: preamble.tex
\usepackage{amsfonts}
\usepackage{amsmath}
\usepackage{mathrsfs}
\usepackage{wrapfig}
\usepackage{xspace}
\usepackage{multirow}
\usepackage{mathtools}
\usepackage{thmtools,thm-restate}
\usepackage{array}
\usepackage{booktabs}
\usepackage{layouts}
\usepackage[hypcap=false]{caption}
\usepackage{siunitx}
\usepackage{placeins}
\usepackage{graphicx}
\usepackage{microtype} 
\usepackage{upgreek}

\usepackage{pifont} 

\usepackage{makecell}

\usepackage{dirtytalk} 
\usepackage{enumitem}
\usepackage{hyperref} %
\usepackage{icomma} 

\newcommand{\projWebsite}{\href{https://um-arm-lab.github.io/capture}{\texttt{https://um-arm-lab.github.io/capture}}\xspace}
\newcommand{\transdimensional}{trans-dimensional\xspace}

\newcolumntype{G}{>{\columncolor{green!15}}c} 
\newcolumntype{R}{>{\columncolor{red!15}}c} 
\newcolumntype{O}{>{\color{Orange}\bfseries\boldmath}c} 
\newcommand{\resultHeader}[1]{{\normalcolor\mdseries\unboldmath #1}}

\newcommand{\superConfig}{\mathfrak{c}_{t}}
\newcommand{\superVelocity}{\dot{\mathfrak{c}}_{t}}
\newcommand{\superCspace}{\mathfrak{C}}
\newcommand{\nextSuperConfig}{\mathfrak{c}_{t+1}}
\newcommand{\config}{c_{t}}

\newcommand{\nextConfig}{{c}_{t+1}}

\newcommand{\Cspace}{C}
\newcommand{\feasibleCspace}{C_{\mathrm{feas}}}
\newcommand{\obstacleCspace}{C_{\mathrm{obs}}}

\newcommand{\state}{s_t}
\newcommand{\nextState}{s_{t+1}}
\newcommand{\initState}{s_{0}}
\newcommand{\goal}{\texttt{Goal}\xspace}
\newcommand{\failure}{\texttt{Fail}\xspace}
\newcommand{\dt}{\Delta t}
\newcommand{\noise}{\varepsilon_t}
\newcommand{\actionAt}[1]{a_{#1}}
\newcommand{\action}{\actionAt{t}}
\newcommand{\boundedActionAt}[1]{\bar a_{#1}}
\newcommand{\actionMin}{a_{\min}}
\newcommand{\actionMax}{a_{\max}}
\newcommand{\actionSpace}{\mathcal A}
\newcommand{\trueDynamics}{f}
\newcommand{\approxDynamics}{\hat f}

\newcommand{\gravity}{g_{\mathrm{grav}}}
\newcommand{\CPinput}{X}
\newcommand{\CPinputTest}{X_{n+1}}
\newcommand{\CPinputSpace}{\mathcal X}
\newcommand{\CPoutput}{Y}
\newcommand{\CPoutputTest}{Y_{n+1}}
\newcommand{\CPoutputSpace}{\mathcal Y}
\newcommand{\predParticleCloud}[1]{\hat{\mathbf Y}_{#1}}
\newcommand{\predParticle}[2]{\hat Y_{#1,#2}}
\newcommand{\scoreFunc}{r}
\newcommand{\scoreVal}{R}

\newcommand{\calibset}{D_{\mathrm{cal}}}
\newcommand{\trainset}{D_{\mathrm{train}}}
\newcommand{\augset}{\bar{D}_{\mathrm{cal}}}
\newcommand{\pcpAugset}{\widetilde{D}_{\mathrm{cal}}}
\newcommand{\numcal}{n}
\newcommand{\CPregion}{\hat{\mathcal C}}
\newcommand{\CPthreshold}{\hat q}
\newcommand{\strata}{S_m}

\newcommand{\groupIndex}{j}
\newcommand{\numGroups}{J}
\newcommand{\nearestNeighborIndex}{k_{\mathrm{NN}}}
\newcommand{\costObjective}{\operatorname{Cost}}
\newcommand{\captureAblationHeader}[1]{\makecell[c]{Ablation\\($\nearestNeighborIndex=#1$)\\}}
\newcommand{\capturePrimaryHeader}[1]{\makecell[c]{\textbf{\method}\\($\nearestNeighborIndex=#1$)}}
\newcommand{\dtree}{\texttt{DTree}\xspace}
\newcommand{\tiltGainAlpha}{\gamma_\alpha}
\newcommand{\tiltGainBeta}{\gamma_\beta}

\newcommand{\failureRate}{\upalpha}

\newcommand{\indexer}{\mathcal T}
\newcommand{\pred}{\hat f}
\newcommand{\densityEstimate}{\hat p (\CPoutput \mid \CPinput)}

\newcommand{\predspace}{\mathcal H}

\newtheorem{assumption}{Assumption}
\newcommand{\transpose}{^\mathsf{T}}

\newcommand{\method}{\text{CaPTURe}\xspace}
\newcommand{\methodNameLong}{\textbf{Ca}librated \textbf{P}article-sets for \textbf{T}rans-dimensional \textbf{U}ncertainty \textbf{Re}presentation\xspace}

\newcommand{\particleNoCP}{{ParticleNoCP}\xspace}
\newcommand{\particleNoCPHeader}{\makecell[c]{Particle\\NoCP}}
\newcommand{\splitcp}{{SplitCP}\xspace}
\newcommand{\lucca}{{LUCCa}\xspace}
\newcommand{\pcp}{{PCP}\xspace}
\newcommand{\ablationNoStrata}{Ablation w/o stratum label\xspace}
\newcommand{\ablationNoStrataHeader}[1]{\makecell[c]{Ablation\\w/o stratum\\label ($\nearestNeighborIndex=#1$)}}
\newcommand{\avgStepsHeader}{\makecell[l]{Avg. Steps\\(mean$\pm$std) $\downarrow$}}



\makeatletter
\@ifundefined{proof}{%
\@ifundefined{IEEEproof}{%
\newenvironment{proof}[1][Proof]{%
\par\noindent\textit{#1: }\ignorespaces
}{%
\hspace*{\fill}$\square$\par
}%
}{%
\newenvironment{proof}{\IEEEproof}{\endIEEEproof}%
}%
}{}
\makeatother

%% file: algo1.tex
\begin{algorithm2e}[!t]
\caption{\small \methodNameLong}
\label{alg:SParCP}

\LinesNumbered
\SetAlgoNlRelativeSize{0}
\SetNlSty{algolinenum}{}{}
\SetNlSkip{-1.5em}
\SetInd{0.8em}{1em}
\SetAlgoNoLine
\DontPrintSemicolon
\Indp
\KwIn{$\approxDynamics,\; \calibset,\; \failureRate,\; \costObjective, \nearestNeighborIndex, \goal, \failure, \indexer, H, \{S_m\}_{m\in M}, L$}

\BlankLine
\tcc{Offline: iterate over the full calibration set $\calibset$}
\For{$(\CPinput_i,\CPoutput_i)\in \calibset$}{
    $\predParticleCloud{i} \leftarrow \{\predParticle{i}{1},\dots,\predParticle{i}{L}\},\;
    \predParticle{i}{l}\sim \approxDynamics(\CPinput_i)$
    \hfill \tcp*{Sample $L$ particles from $\pred(\CPinput_i)$}
    $\scoreVal_i \leftarrow \mathrm{kNN}_{\nearestNeighborIndex}(\CPoutput_i,\predParticleCloud{i})$
    \hfill \tcp*{Compute nonconformity score using Equation~\eqref{eq:knn}}
}

$\augset \leftarrow \{(\CPinput_i,\indexer(\CPoutput_i),\scoreVal_i):(\CPinput_i,\CPoutput_i)\in\calibset\}$\;
$\augset^{part},\augset^{cp}\leftarrow\texttt{RandomSplit}(\augset)$\;
\tcc{Offline: Fit stratum-aware decision tree with $\augset^{part}$}
$\texttt{DTree}
\leftarrow
\texttt{CART}(\augset^{part})$
\hfill \tcp*{Partition $\CPinputSpace \times M$ to $\min \text{Var}(\scoreVal)$}
\tcc{Offline: perform SplitCP on each \texttt{DTree}-induced group using $\augset^{cp}$}
$\groupIndex_i \leftarrow \texttt{DTree}(\CPinput_i,\indexer(\CPoutput_i)),\ \ \forall(\CPinput_i,\indexer(\CPoutput_i),\scoreVal_i)\in\augset^{cp}$
\hfill\tcp*{Assign calibration data to groups}

$\hat{q}_{\groupIndex} \leftarrow \texttt{SplitCP}(\{\scoreVal_i\}_{i\in \mathcal I_{\groupIndex}};\failureRate),\ \ \forall \groupIndex\in\{1,\ldots,\numGroups\}$
\label{algLine:qhatsplitcp}
\hfill\tcp*{Get threshold per leaf node}

\BlankLine
\tcc{Online: Model Predictive Control (MPC) with $\method$. Apply $a^*_t$}

\While{$s_t\notin\goal \land s_t\notin\failure$}{

\tcp{\texttt{MPPI} calls \texttt{CP\_REGION} with $(s_\tau,a_\tau)$ and propagated particles; for $\tau>t$, set $s_\tau$ to the particle-cloud mean.}

$a_{t:t+H-1}^*
\leftarrow
\texttt{MPPI}\big(
s_t,\approxDynamics,
\texttt{CP\_REGION},
\texttt{DTree},\{\hat q_{\groupIndex}\}_{j=1}^J,\nearestNeighborIndex,\{S_m\}_{m\in M},
\costObjective,\goal,H
\big)$\;
}

\end{algorithm2e}

%% file: algo2.tex
\begin{algorithm2e}[!h]
\caption{\small \texttt{CP\_REGION} (Stratified CP Region Construction, cf. Figure~\ref{fig:inference})}
\label{alg:rollout_SPar-CP}

\LinesNumbered
\SetAlgoNlRelativeSize{0}
\SetNlSty{algolinenum}{}{}
\SetNlSkip{-1.5em}
\SetInd{0.8em}{1em}
\SetAlgoNoLine
\DontPrintSemicolon
\Indp

\KwIn{$s_\tau,a_\tau,\predParticleCloud{\tau+1},\texttt{DTree},\{\hat q_{\groupIndex}\},\nearestNeighborIndex,\{S_m\}_{m\in M}$}

\BlankLine
$\CPinput_\tau \leftarrow (s_\tau,a_\tau)$
\hfill \tcp*{State-action pair at MPPI rollout step $\tau$}

\BlankLine
\tcc{Construct one prediction region per candidate future stratum}
\For{$m\in M$}{
    $\groupIndex_m \leftarrow \texttt{DTree}(\CPinput_\tau,m)$
    \hfill \tcp*{Get group for $\CPinput_\tau$ + stratum index $m$}

    $\CPregion_{\tau+1}^{(m)}
    \leftarrow
    \{y\in S_m:
    \mathrm{kNN}_{\nearestNeighborIndex}(y,\predParticleCloud{\tau+1})
    \le
    \hat q_{\groupIndex_m}\}$
    \hfill \tcp*{Build region in $S_m$}
}

$\CPregion_{\tau+1}
\leftarrow
\bigcup_{m\in M}\CPregion_{\tau+1}^{(m)}$
\hfill \tcp*{Union over candidate C-space strata}

\textbf{return} $\CPregion_{\tau+1}$
\end{algorithm2e}

%% file: marble_testcases_table.tex
\newsavebox{\marbleCoverageTableBox}
\begin{table*}[!t]
\centering
\footnotesize
\setlength{\tabcolsep}{3pt}
\renewcommand{\arraystretch}{1.12}
\captionsetup{skip=2pt,width=\textwidth,font=footnotesize}
\caption{Marble labyrinth empirical coverage and C-space volume (over 427,680 test cases).}
\label{tab:marblecoverage}
\sbox{\marbleCoverageTableBox}{%
\begin{tabular}{l!{\vrule width 0.45pt}l!{\vrule width 0.45pt}ccccc}
\toprule
\multicolumn{1}{c!{\vrule width 0.45pt}}{\makecell[c]{Metric}} & \makecell[c]{Stratum} & \lucca & \pcp & \ablationNoStrataHeader{8} & \captureAblationHeader{1} & \capturePrimaryHeader{8} \\
\midrule
\multirow{4}{*}{\makecell[c]{Empirical\\Coverage (\%) $\uparrow$}} & All Strata & 90.9 & 90.0 & 90.1 & 90.2 & 90.2 \\
 & \cellcolor{black!8}Free Space (2D) & \cellcolor{black!8}83.4 & \cellcolor{black!8}76.4 & \cellcolor{black!8}83.7 & \cellcolor{black!8}90.1 & \cellcolor{black!8}90.0 \\
 & Edge (1D) & 94.4 & 96.4 & 93.1 & 89.9 & 90.2 \\
 & \cellcolor{black!8}Corner (0D) & \cellcolor{black!8}96.5 & \cellcolor{black!8}99.0 & \cellcolor{black!8}94.8 & \cellcolor{black!8}92.4 & \cellcolor{black!8}90.6 \\
\midrule
\makecell[c]{Avg. C-space\\Volume (ratio) $\downarrow$} & All Strata & 1.48 & 0.71 & 0.86 & 1.05 & 1.00 \\
\bottomrule
\end{tabular}
}
\usebox{\marbleCoverageTableBox}
\par\vspace{3pt}
\begin{minipage}{\wd\marbleCoverageTableBox}
\footnotesize
\raggedright
User-specified coverage is $90\%$. C-space volume is reported as a ratio relative to \textbf{\method}.\par
\end{minipage}
\vspace{-6mm}
\end{table*}

%% file: marble_planning_table.tex
\newsavebox{\marbleplanningtablebox}
\begin{table*}[!t]
\centering
\footnotesize
\setlength{\tabcolsep}{3pt}
\renewcommand{\arraystretch}{1.12}
\sbox{\marbleplanningtablebox}{%
\begin{tabular}{c!{\vrule width 0.45pt}l!{\vrule width 0.45pt}cccccc}
\toprule
\makecell[c]{Map} & \makecell[c]{Metric} & \particleNoCPHeader & \pcp & \lucca & \ablationNoStrataHeader{8} & \captureAblationHeader{1} & \capturePrimaryHeader{8} \\
\midrule
\multirow{4}{*}{\makecell[c]{Center}} & \cellcolor{black!8}Success (\%) $\uparrow$ & \cellcolor{black!8}53 & \cellcolor{black!8}3 & \cellcolor{black!8}0 & \cellcolor{black!8}\underline{57} & \cellcolor{black!8}7 & \cellcolor{black!8}\textbf{90} \\
 & \textcolor{black}{Fell in Pit} (\%) $\downarrow$ & 47 & 13 & 0 & 43 & 0 & 10 \\
 & \cellcolor{black!8}Timed Out (\%) $\downarrow$ & \cellcolor{black!8}0 & \cellcolor{black!8}84 & \cellcolor{black!8}100 & \cellcolor{black!8}0 & \cellcolor{black!8}93 & \cellcolor{black!8}0 \\
 & \avgStepsHeader & 9.9$\pm$1.3 & 8.0$\pm$0.0 & -- & 18.5$\pm$11.1 & 86.0$\pm$12.7 & 11.9$\pm$4.3 \\
\midrule
\multirow{4}{*}{\makecell[c]{Center\\Right}} & \cellcolor{black!8}Success (\%) $\uparrow$ & \cellcolor{black!8}83 & \cellcolor{black!8}\underline{87} & \cellcolor{black!8}57 & \cellcolor{black!8}83 & \cellcolor{black!8}\textbf{90} & \cellcolor{black!8}\textbf{90} \\
 & \textcolor{black}{Fell in Pit} (\%) $\downarrow$ & 17 & 13 & 40 & 17 & 10 & 10 \\
 & \cellcolor{black!8}Timed Out (\%) $\downarrow$ & \cellcolor{black!8}0 & \cellcolor{black!8}0 & \cellcolor{black!8}3 & \cellcolor{black!8}0 & \cellcolor{black!8}0 & \cellcolor{black!8}0 \\
 & \avgStepsHeader & 13.8$\pm$2.1 & 13.7$\pm$1.6 & 33.2$\pm$14.9 & 15.7$\pm$2.1 & 19.0$\pm$6.3 & 16.4$\pm$4.3 \\
\midrule
\multirow{4}{*}{\makecell[c]{Bottom\\Left}} & \cellcolor{black!8}Success (\%) $\uparrow$ & \cellcolor{black!8}40 & \cellcolor{black!8}57 & \cellcolor{black!8}17 & \cellcolor{black!8}47 & \cellcolor{black!8}\underline{67} & \cellcolor{black!8}\textbf{73} \\
 & \textcolor{black}{Fell in Pit} (\%) $\downarrow$ & 60 & 43 & 83 & 53 & 33 & 27 \\
 & \cellcolor{black!8}Timed Out (\%) $\downarrow$ & \cellcolor{black!8}0 & \cellcolor{black!8}0 & \cellcolor{black!8}0 & \cellcolor{black!8}0 & \cellcolor{black!8}0 & \cellcolor{black!8}0 \\
 & \avgStepsHeader & 16.2$\pm$3.0 & 16.4$\pm$3.3 & 25.4$\pm$5.0 & 18.5$\pm$7.2 & 16.9$\pm$3.6 & 23.1$\pm$13.6 \\
\midrule
\multirow{4}{*}{\makecell[c]{Bottom\\Right}} & \cellcolor{black!8}Success (\%) $\uparrow$ & \cellcolor{black!8}93 & \cellcolor{black!8}\textbf{100} & \cellcolor{black!8}67 & \cellcolor{black!8}\underline{97} & \cellcolor{black!8}\textbf{100} & \cellcolor{black!8}\textbf{100} \\
 & \textcolor{black}{Fell in Pit} (\%) $\downarrow$ & 7 & 0 & 0 & 3 & 0 & 0 \\
 & \cellcolor{black!8}Timed Out (\%) $\downarrow$ & \cellcolor{black!8}0 & \cellcolor{black!8}0 & \cellcolor{black!8}33 & \cellcolor{black!8}0 & \cellcolor{black!8}0 & \cellcolor{black!8}0 \\
 & \avgStepsHeader & 14.5$\pm$1.1 & 14.9$\pm$1.1 & 60.0$\pm$24.6 & 14.7$\pm$1.1 & 15.2$\pm$1.2 & 15.2$\pm$2.0 \\
\midrule
\multirow{4}{*}{\makecell[c]{Top\\Left}} & \cellcolor{black!8}Success (\%) $\uparrow$ & \cellcolor{black!8}\textbf{100} & \cellcolor{black!8}\underline{93} & \cellcolor{black!8}\textbf{100} & \cellcolor{black!8}\textbf{100} & \cellcolor{black!8}\textbf{100} & \cellcolor{black!8}\textbf{100} \\
 & \textcolor{black}{Fell in Pit} (\%) $\downarrow$ & 0 & 7 & 0 & 0 & 0 & 0 \\
 & \cellcolor{black!8}Timed Out (\%) $\downarrow$ & \cellcolor{black!8}0 & \cellcolor{black!8}0 & \cellcolor{black!8}0 & \cellcolor{black!8}0 & \cellcolor{black!8}0 & \cellcolor{black!8}0 \\
 & \avgStepsHeader & 12.6$\pm$1.3 & 12.8$\pm$1.4 & 13.5$\pm$1.2 & 12.5$\pm$1.1 & 12.7$\pm$0.9 & 12.5$\pm$1.4 \\
\midrule
\multirow{4}{*}{\makecell[c]{Top\\Center}} & \cellcolor{black!8}Success (\%) $\uparrow$ & \cellcolor{black!8}67 & \cellcolor{black!8}83 & \cellcolor{black!8}70 & \cellcolor{black!8}73 & \cellcolor{black!8}\textbf{100} & \cellcolor{black!8}\underline{93} \\
 & \textcolor{black}{Fell in Pit} (\%) $\downarrow$ & 33 & 17 & 30 & 27 & 0 & 7 \\
 & \cellcolor{black!8}Timed Out (\%) $\downarrow$ & \cellcolor{black!8}0 & \cellcolor{black!8}0 & \cellcolor{black!8}0 & \cellcolor{black!8}0 & \cellcolor{black!8}0 & \cellcolor{black!8}0 \\
 & \avgStepsHeader & 12.3$\pm$2.1 & 12.6$\pm$2.1 & 17.0$\pm$6.6 & 11.6$\pm$1.7 & 13.7$\pm$2.3 & 16.2$\pm$4.4 \\
\bottomrule
\end{tabular}%
}
\makebox[\textwidth][c]{%
\begin{minipage}{\wd\marbleplanningtablebox}
\captionsetup{skip=2pt,width=\linewidth,font=footnotesize,justification=raggedright,singlelinecheck=false}
\caption{Planning results across six marble maze sections (30 runs each).}
\label{tab:marbleplanning}
\end{minipage}
}
\makebox[\textwidth][c]{\usebox{\marbleplanningtablebox}}
\par\vspace{3pt}
\makebox[\textwidth][c]{%
\begin{minipage}{\wd\marbleplanningtablebox}
\footnotesize
\raggedright
\looseness=-1
\textcolor{black}{Success $=$ reaching \goal without falling into a pit.} Avg. Steps reports mean$\pm$std steps to goal over successful episodes. Time-outs occur when an episode reaches step 100. \textbf{Bold} is best, \underline{underline} is second best.\par
\end{minipage}
}
\vspace{-6mm}
\end{table*}

%% file: pih_coverage_table.tex
\newsavebox{\pihCoverageTableBox}
\begin{table*}[!t]
\centering
\footnotesize
\setlength{\tabcolsep}{3pt}
\renewcommand{\arraystretch}{1.12}
\captionsetup{skip=2pt,width=\textwidth,font=footnotesize}
\caption{Peg-in-hole empirical coverage and C-space volume over 90{,}000 held-out one-step transitions.}
\label{tab:pih_coverage}
\sbox{\pihCoverageTableBox}{%
\begin{tabular}{l!{\vrule width 0.45pt}l!{\vrule width 0.45pt}ccccc}
\toprule
\multicolumn{1}{c!{\vrule width 0.45pt}}{\makecell[c]{Metric}}
& \makecell[c]{Interaction Regime}
& \resultHeader{\lucca}
& \resultHeader{\pcp}
& \resultHeader{\ablationNoStrataHeader{4}}
& \resultHeader{\captureAblationHeader{1}}
& \resultHeader{\capturePrimaryHeader{4}} \\
\midrule

\multirow{4}{*}{\makecell[c]{Empirical\\Coverage (\%) $\uparrow$}}
& Aggregate
& 90.3
& 89.9
& 91.1
& 90.9
& 90.4 \\

& \cellcolor{black!8}Free Space (3D)
& \cellcolor{black!8}83.8
& \cellcolor{black!8}83.1
& \cellcolor{black!8}84.9
& \cellcolor{black!8}91.4
& \cellcolor{black!8}91.0 \\

& Point Contact (2D)
& 96.6
& 96.7
& 97.3
& 91.0
& 90.1 \\

& \cellcolor{black!8}Edge Contact (1D)
& \cellcolor{black!8}93.5
& \cellcolor{black!8}93.1
& \cellcolor{black!8}94.5
& \cellcolor{black!8}90.3
& \cellcolor{black!8}90.3 \\

\midrule
\makecell[c]{Avg. C-space\\Volume (ratio) $\downarrow$}
& Aggregate
& 1.03
& 0.70
& 1.09
& 0.81
& 1.00 \\

\bottomrule
\end{tabular}
}%

\makebox[\textwidth][c]{\usebox{\pihCoverageTableBox}}

\par\vspace{3pt}

\makebox[\textwidth][c]{%
\begin{minipage}{\wd\pihCoverageTableBox}
\footnotesize
\raggedright
User-specified coverage is $90\%$. The reported interaction regimes are derived from the realized landing contact fingerprint $\phi(c_{t+1})$. C-space volume is reported as a ratio relative to \textbf{\method}.\par
\end{minipage}
}
\vspace{-6mm}
\end{table*}

%% file: pih_inference_rand_table.tex
  \newsavebox{\pegplanningtablebox}
  \begin{table}[!t]
  \vspace{-3mm}
  \centering
  \footnotesize
  \setlength{\tabcolsep}{3pt}
  \renewcommand{\arraystretch}{1.12}

  \sbox{\pegplanningtablebox}{%
  \begin{tabular}{l!{\vrule width 0.45pt}cccccc}
  \toprule
  Metric
  & \resultHeader{\particleNoCP}
  & \resultHeader{\pcp}
  & \resultHeader{\lucca}
  & \resultHeader{\ablationNoStrataHeader{4}}
  & \resultHeader{\captureAblationHeader{1}}
  & \resultHeader{\capturePrimaryHeader{4}} \\
  \midrule

  \cellcolor{black!8}Success (\%) $\uparrow$
  & \cellcolor{black!8}14
  & \cellcolor{black!8}\underline{48}
  & \cellcolor{black!8}\underline{48}
  & \cellcolor{black!8}\underline{48}
  & \cellcolor{black!8}20
  & \cellcolor{black!8}\textbf{78} \\

  \avgStepsHeader
  & 39.9$\pm$13.3
  & 38.4$\pm$13.9
  & 41.5$\pm$16.6
  & 43.1$\pm$15.9
  & 36.2$\pm$16.4
  & 28.7$\pm$8.9 \\

  \bottomrule
  \end{tabular}%
  }

  \makebox[\linewidth][c]{%
  \begin{minipage}{\wd\pegplanningtablebox}
  \captionsetup{
      skip=2pt,
      width=\linewidth,
      justification=raggedright,
      singlelinecheck=false
  }
  \caption{Peg-insertion planning over 50 feasible Halton initial configurations.}
  \label{tab:pih_inference_rand}
  \end{minipage}
  }

  \makebox[\linewidth][c]{\usebox{\pegplanningtablebox}}

  \par\vspace{3pt}

  \makebox[\linewidth][c]{%
  \begin{minipage}{\wd\pegplanningtablebox}
  \raggedright
  Success denotes insertion within 75 controller steps (5.0 simulated seconds).
  Avg. Steps reports mean$\pm$std over successful episodes only.
  \end{minipage}
  }

  \vspace{-3mm}
  \end{table}

%% file: marble_testcasesPerMap_table.tex
\newsavebox{\marbleCoverageByMapTableBox}
\begin{table*}[!b]
\centering
\footnotesize
\setlength{\tabcolsep}{3pt}
\renewcommand{\arraystretch}{1.0}
\captionsetup{skip=2pt,width=\textwidth,font=footnotesize}
\caption{Marble labyrinth empirical coverage and C-space volume by maze section.}
\label{tab:marblecoveragebymap}
\sbox{\marbleCoverageByMapTableBox}{%
\begin{tabular}{l!{\vrule width 0.45pt}l!{\vrule width 0.45pt}l!{\vrule width 0.45pt}ccccc}
\toprule
\makecell[c]{Map} & \makecell[c]{Metric} & \makecell[c]{Stratum} & \lucca & \pcp & \ablationNoStrataHeader{8} & \captureAblationHeader{1} & \capturePrimaryHeader{8} \\
\midrule
\multirow{5}{*}{\makecell[c]{Bottom\\Left}} & \multirow{4}{*}{\makecell[c]{Empirical\\Coverage (\%) $\uparrow$}} & All Strata & 90.7 & 90.0 & 90.0 & 90.1 & 90.0 \\
 &  & \cellcolor{black!8}Free Space (2D) & \cellcolor{black!8}83.3 & \cellcolor{black!8}75.9 & \cellcolor{black!8}82.8 & \cellcolor{black!8}90.2 & \cellcolor{black!8}90.2 \\
 &  & Edge (1D) & 94.6 & 97.7 & 94.0 & 90.0 & 89.5 \\
 &  & \cellcolor{black!8}Corner (0D) & \cellcolor{black!8}96.8 & \cellcolor{black!8}98.5 & \cellcolor{black!8}93.8 & \cellcolor{black!8}90.3 & \cellcolor{black!8}92.4 \\
\cline{2-8}\noalign{\vskip 1.5pt}
 & \makecell[c]{Volume (ratio) $\downarrow$} & All Strata & 1.52 & 0.71 & 0.80 & 0.97 & 1.00 \\
\midrule
\multirow{5}{*}{\makecell[c]{Bottom\\Right}} & \multirow{4}{*}{\makecell[c]{Empirical\\Coverage (\%) $\uparrow$}} & All Strata & 91.1 & 90.0 & 90.2 & 90.3 & 90.0 \\
 &  & \cellcolor{black!8}Free Space (2D) & \cellcolor{black!8}83.7 & \cellcolor{black!8}76.0 & \cellcolor{black!8}85.2 & \cellcolor{black!8}90.1 & \cellcolor{black!8}89.9 \\
 &  & Edge (1D) & 94.5 & 96.7 & 92.6 & 89.8 & 89.9 \\
 &  & \cellcolor{black!8}Corner (0D) & \cellcolor{black!8}96.1 & \cellcolor{black!8}98.7 & \cellcolor{black!8}93.4 & \cellcolor{black!8}95.1 & \cellcolor{black!8}91.5 \\
\cline{2-8}\noalign{\vskip 1.5pt}
 & \makecell[c]{Volume (ratio) $\downarrow$} & All Strata & 1.34 & 0.64 & 0.82 & 1.07 & 1.00 \\
\midrule
\multirow{5}{*}{\makecell[c]{Center}} & \multirow{4}{*}{\makecell[c]{Empirical\\Coverage (\%) $\uparrow$}} & All Strata & 90.9 & 90.1 & 90.0 & 89.9 & 90.2\\
 &  & \cellcolor{black!8}Free Space (2D) & 81.6 & 74.8 & 81.9 & 89.8 & 89.5 \\
 &  & Edge (1D) & 94.8 & 96.6 & 93.4 & 89.9 & 90.5 \\
 &  & \cellcolor{black!8}Corner (0D) & 96.4 & 99.9 & 95.4 & 90.3 & 91.1 \\
\cline{2-8}\noalign{\vskip 1.5pt}
 & \makecell[c]{Volume (ratio) $\downarrow$} & All Strata & 1.30 & 0.52 & 0.84 & 1.07 & 1.00  \\
\midrule
\multirow{5}{*}{\makecell[c]{Top\\Center}} & \multirow{4}{*}{\makecell[c]{Empirical\\Coverage (\%) $\uparrow$}} & All Strata & 91.1 & 89.9 & 90.4 & 90.2 & 90.5 \\
 &  & \cellcolor{black!8}Free Space (2D) & \cellcolor{black!8}85.4 & \cellcolor{black!8}80.5 & \cellcolor{black!8}85.4 & \cellcolor{black!8}90.2 & \cellcolor{black!8}90.4 \\
 &  & Edge (1D) & 94.0 & 94.2 & 92.7 & 90.2 & 90.6 \\
 &  & \cellcolor{black!8}Corner (0D) & \cellcolor{black!8}94.9 & \cellcolor{black!8}99.6 & \cellcolor{black!8}94.8 & \cellcolor{black!8}90.5 & \cellcolor{black!8}90.0 \\
\cline{2-8}\noalign{\vskip 1.5pt}
 & \makecell[c]{Volume (ratio) $\downarrow$} & All Strata & 1.52 & 0.82 & 0.86 & 1.07 & 1.00 \\
\midrule
\multirow{5}{*}{\makecell[c]{Top\\Left}} & \multirow{4}{*}{\makecell[c]{Empirical\\Coverage (\%) $\uparrow$}} & All Strata & 90.7 & 90.0 & 90.1 & 90.2 & 90.2 \\
 &  & \cellcolor{black!8}Free Space (2D) & \cellcolor{black!8}82.3 & \cellcolor{black!8}76.1 & \cellcolor{black!8}84.3 & \cellcolor{black!8}90.3 & \cellcolor{black!8}90.1 \\
 &  & Edge (1D) & 94.0 & 95.6 & 92.2 & 90.2 & 90.2 \\
 &  & \cellcolor{black!8}Corner (0D) & \cellcolor{black!8}97.5 & \cellcolor{black!8}99.9 & \cellcolor{black!8}96.1 & \cellcolor{black!8}90.7 & \cellcolor{black!8}89.9 \\
\cline{2-8}\noalign{\vskip 1.5pt}
 & \makecell[c]{Volume (ratio) $\downarrow$} & All Strata & 1.63 & 0.97 & 1.00 & 1.08 & 1.00 \\
\midrule
\multirow{5}{*}{\makecell[c]{Center\\Right}} & \multirow{4}{*}{\makecell[c]{Empirical\\Coverage (\%) $\uparrow$}} & All Strata & 91.1 & 90.0 & 90.1 & 90.1 & 90.1 \\
 &  & \cellcolor{black!8}Free Space (2D) & \cellcolor{black!8}83.3 & \cellcolor{black!8}74.6 & \cellcolor{black!8}82.3 & \cellcolor{black!8}90.2 & \cellcolor{black!8}90.2 \\
 &  & Edge (1D) & 94.5 & 97.6 & 93.6 & 89.2 & 90.3 \\
 &  & \cellcolor{black!8}Corner (0D) & \cellcolor{black!8}97.7 & \cellcolor{black!8}97.6 & \cellcolor{black!8}95.9 & \cellcolor{black!8}96.3 & \cellcolor{black!8}88.6 \\
\cline{2-8}\noalign{\vskip 1.5pt}
 & \makecell[c]{Volume (ratio) $\downarrow$} & All Strata & 1.79 & 0.80 & 0.93 & 1.04 & 1.00 \\
\bottomrule
\end{tabular}
}
\usebox{\marbleCoverageByMapTableBox}
\par\vspace{3pt}
\begin{minipage}{\wd\marbleCoverageByMapTableBox}
\footnotesize
\raggedright
User-specified coverage is $90\%$. C-space volume is reported as a ratio relative to \textbf{\method}.\par
Number of test cases per map: Center: 76,464; Center Right: 63,504; Bottom Left: 68,688; Bottom Right: 82,944; Top Left: 62,208; Top Center: 73,872.\par
\end{minipage}
\end{table*}